\documentclass[twocolumn]{autart}    
\usepackage{amsfonts}
\usepackage[table]{xcolor}
\usepackage{graphicx}          
\usepackage{epstopdf}
\usepackage{amsmath}
\usepackage{amssymb}
\usepackage{bm}
\usepackage[numbers,sort&compress]{natbib}
\usepackage{algpseudocode,algorithm}
\usepackage{multirow}
\usepackage{subfigure}
\usepackage{enumerate}
\usepackage{textcomp}
\usepackage{mathrsfs} 
\usepackage{setspace}
\usepackage{nomencl}
\usepackage{threeparttable}
\usepackage{comment}
\usepackage{appendix}
\usepackage{wrapfig}
\usepackage[colorlinks=true, linkcolor=blue, urlcolor=red, citecolor=black]{hyperref}

\graphicspath{{Graphics/}}
\newtheorem{lemma}{\bf Lemma}

\newtheorem{theorem}{\bf Theorem}
\newtheorem{remark}{\bf Remark}

\newtheorem{property}{Property}

\begin{document}
	\setlength{\abovedisplayskip}{3pt}
	\setlength{\belowdisplayskip}{3pt}
	\begin{frontmatter}
		
		\title{Hierarchical Variational Kalman Filtering}
		
		\thanks[footnoteinfo]{Shilei Li and Dawei Shi are with School of Automation, Beijing Institute of Technology, Beijing 100081, China (e-mail: shileili@bit.edu.cn, daweishi@bit.edu.cn). Wei Zheng is with Shenzhen Youjia Innovation Technology Co., Ltd., Shenzhen, China (e-mail: zhengwei@minieye.cc). Ling Shi is with the Department of Electronic and Computer Engineering, The Hong Kong University of Science and Technology, Kowloon, Hong Kong (e-mail: eesling@ust.hk). (Corresponding author: Dawei Shi.)}
		\author[ad1]{Shilei Li}\ead{shileili@bit.edu.cn}, 
		\author[ad1]{Dawei Shi}\ead{daweishi@bit.edu.cn},
		\author[ad2]{Wei Zheng}\ead{zhengwei@minieye.cc},
		\author[ad3]{Ling Shi}\ead{eesling@ust.hk}
		
		\address[ad1]{School of Automation, Beijing Institute of Technology, Beijing 100081, China}
		\address[ad2]{Shenzhen Youjia Innovation Technology Co., Ltd., Shenzhen, China}
		\address[ad3]{Department of Electronic and Computer Engineering, The Hong Kong University of Science and Technology, Clear Water Bay, Kowloon, Hong Kong, China}
		
		\begin{keyword}
			Variational inference, Marginalized maximum a posteriori, Rank-deficient covariance tracking, Kalman filtering
		\end{keyword}
		
		\begin{abstract}
			Traditional variational Kalman filtering with unknown noise statistics suffers from inconsistent process covariance estimation and slow convergence speed, limiting its practical utility. To address these issues, we introduce a surrogate variable representing the process-noise-free state, which enables explicit modeling and inference of process noise statistics. In addition, we reformulate the conventional coordinate ascent variation inference (CAVI) as a marginalized maximum a posteriori problem, followed by a single-step hyperparameter fitting. This reformulation obviates the need for multiple inner iterations inherent to CAVI and decouples the design of the covariance tracking filters. Consequently, this architecture permits the deployment of higher-order filters for covariance tracking and enables sliding-window hyperparameter estimation. Notably, when this window encompasses all historical data, the covariance tracking estimator intrinsically operates as a zero-phase filter. Numerical simulations validate the theoretical framework, demonstrating the enhanced convergence speed and superior estimation accuracy compared with existing methods.
		\end{abstract}
	\end{frontmatter}
	\section{Introduction}
	\label{sec:introduction}
	{Inferring} the hidden state of dynamical systems in real time from the noisy measurements forms the backbone of modern state estimation and control, with many applications including the simultaneous localization and mapping in robotics~\cite{a1}, the camera motion estimation in signal processing~\cite{a2}, the weather prediction in meteorology~\cite{a3}, the target tracking in control~\cite {a4}, to name only a few. The pioneering works of this field trace back to a paper co-authored by Rudolf Kalman and Richard S. Bucy~\cite{a5}, which gives an unbiased minimum estimator under a linear model with Gaussian noise. It is further extended by Schmidt’s group for nonlinear systems with application to solving the Apollo guidance and navigation~\cite{a6}. After that, many variants of the Kalman filter (KF), e.g., UKF~\cite{a7}, CKF~\cite{a8}, correntropy-based KF~\cite{a9}, consensus-based filter~\cite{b1}, have been developed with special emphasis on the nonlinear systems, non-Gaussian noise, and distributed systems. 
	
	Many estimators assume exact knowledge of the noise statistics. However, this assumption is rarely satisfied in practical applications. One important reason is that an exact system model is usually unattainable, and approximations are inevitable. In such cases, unmodeled dynamics can be interpreted as time-varying process or measurement noises.  Modeling errors may also arise from the linearization of nonlinear systems. For example, in the extended Kalman filter (EKF), the system is linearized at the \emph{a priori} estimate of the state using a first-order approximation, implying that the resulting modeling errors depend on the degree of system nonlinearity. If not properly handled, such errors may induce a positive feedback loop and lead to filter divergence~\cite{a11}. 
	
	Operational conditions and external environments also contribute to time-varying noise statistics. For instance, in quadruped robot localization, contact-related noise varies as the robot traverses different terrains~\cite{a12}. In Global Positioning System (GPS) navigation, measurement noise statistics change with satellite visibility and are further affected by multipath and non-line-of-sight (NLOS) propagation~\cite{a13}. Moreover, false data injection (FDI) and denial-of-service (DoS) attacks can also be modeled as switched, or time-varying noise statistics~\cite{a14,a15}, which has attracted increasing attention in cyber-physical system security.
	
	There exist several lines of research on state estimation with time-varying noise statistics, which can be broadly classified into model-based and learning-based approaches. The former category includes moment matching~\cite{a21}, innovation-based methods~\cite{a16}, robust loss–based estimators~\cite{c1,a9}, interacting multiple-model (IMM) approaches~\cite{a17}, and variational Bayesian (VB) inference~\cite{a4,a18}. Covariance matching and innovation-based methods typically rely on second-order statistics of the residuals, which require a sufficiently large sliding window to obtain reliable covariance estimates and may suffer from a risk of divergence~\cite{a22}. Robust loss–induced estimators offer guaranteed convergence; however, by discarding historical noise covariance information, they exhibit limited adaptivity. The IMM-based approach employs a bank of Kalman filters to approximate the unknown noise covariance, but suffers a significant computational burden and becomes impractical for high-dimensional systems, as the number of required filters grows exponentially with the system dimension~\cite{a17}. The VB-based approach, originally proposed by S{\"a}rkk{\"a} and Nummenmaa~\cite{a4}, jointly estimates the state and noise variance by minimizing the Kullback–Leibler divergence between the true posterior and a designated distribution. It achieves performance comparable to that of IMM-based methods while significantly reducing the computational cost, and is favored by many researches. 
	
	Unlike purely model-based methods, KalmanNet~\cite{a25} integrates the KF framework with Gated Recurrent Units (GRUs) to alleviate model mismatch by learning a specialized Kalman gain. This architecture has been further extended in~\cite{a26} and~\cite{a27}, where Transformers and deep neural networks (DNNs) replace the GRUs to offer alternative performance trade-offs. More recently, Buchnik et al.~\cite{buchnik2023latent} introduced latent-KalmanNet, which operates in a latent space to jointly learn feature representations and filtering operations, achieving notable performance gains in visual target tracking. Dahan et al.~\cite{dahan2025bayesian} further augmented it by incorporating uncertainty quantification through a Bayesian Kalman architecture. Another emerging line of research integrates generative modeling with state estimation, leveraging normalizing flows or diffusion models~\cite{wan2026diffpf,chen2024normalizing} to enable high-dimensional, multimodal estimation and extend standard Gaussian posteriors to much more complex distributions.

	It is worth noting that while learning-based methods raise the performance upper bound compared to conventional techniques, they rely heavily on extensive ground-truth state trajectories for training, which are frequently unavailable in practice~\cite{a25,a26,a27,buchnik2023latent}. Furthermore, their generalization capabilities are often compromised when they encounter unseen operational conditions. Finally, learning-based approaches, particularly generative models, inherently operate as black boxes, introducing potential safety risks and making deployment on embedded devices difficult. Alternatively, the VB-based method solves a similar problem by explicitly parameterizing the noise statistics and inferring their posterior distributions via variational Bayesian approximation, which is tractable and explainable. However, existing VB-based methods have some limitations: (1) They are sensitive to outliers and the estimator's performance degenerates significantly with heavy-tailed noises~\cite{a34}; (2) The coordinate ascent variational inference, as a standard tool for variational optimization, converges in a zig-zag way, which is computationally inefficient. (3) Heuristic noise variance dynamics is widely used but without in-depth analysis on its tracking speed~\cite{a4,a19,a20,a28,a29,a30,a31}; (4) The existing solutions cannot well accommodate the time-varying rank-deficient process. (5) The VB-based methods can only track slow-varying noise variance and possess a delay in fast-varying noise statistics~\cite{a28,a33}. These limitations restrict the application of VB-based estimators.
	
	To address the above-mentioned challenges, in this work, we decouple the conventional VB into a sequential marginalized MAP (MMAP)  and hyper-parameter update. Firstly, we reformulate the conventional state-space model as a lifted regression framework, which allows us to separate the process noise and perform inference channel by channel, giving flexibility in handling rank-deficient process covariance. Secondly, we decompose the VB inference as a two-stage problem: MMAP estimation and one single step hyper-parameter update. This operation can not only save the computation resources, but also decouple the state update and variational hyper-parameter update, given a room in handling variational-robust problems~\cite{a24}. Thirdly, we compensate for the variational delay by exploiting the historical errors by adding regulations or considering higher order filters. In the case of covering all historical data, the covariance tracking effectively functions as a zero-phase filter. The contributions of this work are summarized as follows:
	\begin{itemize}
	\item We formulate the state estimation problem as a lifted whitened regression problem and provide the identifiability condition (\textbf{Lemma \ref{identlemma}}), which allows us to treat the process and measurement indiscriminately. In addition, we separate the conventional VB estimation as a two-stage problem: MMAP state estimation and hyper-parameter update, which is computationally efficient and enables robustness and adaptiveness to be achieved sequentially. 
	\item We solve the MMAP using an iterative reweighted least squares (IRLS) method. Meanwhile, we provide the convergence proof of the lifted equation under a class of robust losses in \textbf{Theorem \ref{convRWLS}}. 
	\item We provide a detailed analysis of covariance-tracking speed in \textbf{Theorem \ref{prop:refined_estimator}} and compensate for the delay by exploiting historical information in Algorithms \ref{HOHVKF} and \ref{Alg:sliding_window}. Simulations verify the effectiveness of the proposed method.
	\end{itemize}
	
	The rest of this paper is organized as follows. Section II provides preliminaries and problem formulation.  Section III gives the detailed methodologies. Section IV provides some simulations. Section V concludes the paper.
	
	Notations: $A^T$ and $A^{-1}$ denote the transpose and inverse of matrix $A$. $X \succ 0$ ($X \succcurlyeq 0$) denotes a positive (or semi-positive) definite matrix, and $I_n$ is a $n \times n$ identity matrix. $\mathcal{N}(\mu, \Sigma)$, $\operatorname{St}(0, \tau^2, \nu)$, and $\operatorname{IG}(\alpha, \beta)$ represent the Gaussian, zero-mean Student's $t$, and inverse-gamma distributions, respectively. $\mathbb{E}[\cdot]$ and $\mathbb{E}_q[\cdot]$ denote expectations (standard, and with respect to distribution $q$). Finally, $[v]_i$ and $[M]_{ii}$ represent the $i$-th element of vector $v$ and $i$-th diagonal element of matrix $M$.

	\section{Preliminaries and Problem Formulation}
	We first provide some preliminaries and then give the problem formulation.
	\subsection{Hierarchical Formulation of the Student's $t$-Distribution}
	A random variable $x \sim \operatorname{St}(\mu, \tau^2, \nu)$ with degrees of freedom (DOF) $\nu$ has the probability density function:
	\begin{equation}
		p(x) = \frac{\Gamma((\nu+1)/2)}{\Gamma(\nu/2)\sqrt{\pi\nu\tau^2}} \left( 1 + \frac{(x-\mu)^2}{\nu\tau^2} \right)^{-(\nu+1)/2}.
		\label{stdis}
	\end{equation}
	This heavy-tailed distribution can be equivalently formulated as a Gaussian-scale mixture:
	\begin{equation}
		x \sim \mathcal{N}(\mu, \lambda), \quad \lambda \sim \operatorname{IG}\left(\frac{\nu}{2}, \frac{\nu\tau^2}{2}\right)
		\label{hdis}
	\end{equation}
	where \begin{equation}\nonumber
		\operatorname{IG}\left(\lambda \mid \frac{\nu}{2}, \frac{\nu\tau^2}{2}\right) = \frac{\left(\frac{\nu\tau^2}{2}\right)^{\frac{\nu}{2}}}{\Gamma\left(\frac{\nu}{2}\right)} \lambda^{-\left(\frac{\nu}{2}+1\right)} \exp\left(-\frac{\nu\tau^2}{2\lambda}\right)
	\end{equation} 
	is the inverse-gamma distribution. 
	\begin{remark}
		The symbol $\nu$ and $\tau^2$ have a clear interpretation. Specifically, $\nu$ controls the tail heaviness of the induced Student's $t$-distribution and dictates the estimator's robustness against outliers. Meanwhile, $\tau^2$ serves as the nominal scale parameter, representing the latent covariance of the Gaussian distribution. 
		\label{remark1}
	\end{remark}
	
	By taking the negative logarithm on the right side of \eqref{stdis} and defining $e=x-\mu$, we obtain (ignoring the constant terms):
	\begin{equation}
		\mathcal{L}_{st}^{*}(e) =  \frac{\nu+1}{2}\log\left(1+ \frac{e^2}{\nu\tau^2}\right).
		\label{stloss}
	\end{equation}
	\begin{property}
		As $\nu \to \infty$, one has $\mathcal{L}_{st}=\frac{1}{2}\frac{e^2}{\tau^2}$.
	\end{property}
	This can be proved by taking limit on \eqref{stloss}.
	\begin{property}[\cite{a24}] As $\nu \to \infty$, one has $\operatorname{St}(\mu, \tau^2, \nu) \to \mathcal{N} (\mu, \tau^2)$ where the latent $\lambda$ distribution degenerates to a shifted Dirac delta function $\delta(\lambda-\tau^2)$.
		\label{pro2}
	\end{property}
	\begin{property}[\cite{a24}]
		$\mathcal{L}_{st}$ is convex within $[-\sqrt{\nu}\tau,\sqrt{\nu}\tau]$, and is concave in other regions.
	\end{property}
	\begin{property}
		By denoting $u \triangleq e^2$, the function $\mathcal{L}_{st}(u)= \frac{\nu+1}{2}\log\left(1+ \frac{u}{\nu\tau^2}\right)$ is concave.
	\end{property}
	This can be proved by verifying $\frac{d^2}{du^2}\mathcal{L}_{st}(u) < 0$. 
	\begin{prop}[Half-Quadratic Duality~\cite{nikolova2005analysis}]
		\label{prop_hq}
		Let $\rho(e)$ be a robust loss function such that $\phi(u) = \rho(\sqrt{u})$ is concave, continuously differentiable, and monotonically increasing on $u \in [0, \infty)$. There exists a dual potential function $\psi(\lambda)$ such that the non-convex loss can be expressed as the infimum of an augmented function:
		\begin{equation}
			\begin{aligned}
				\rho(e) = \inf_{\lambda > 0} \left[ \frac{e^2}{2\lambda} + \psi(\lambda) \right]
			\end{aligned}
		\end{equation}
		where $\lambda$ acts as an auxiliary weight variable. The minimum is achieved when $\lambda = \frac{e}{\rho'(e)}$, mapping the robust optimization problem to an iteratively reweighted least squares (IRLS) sequence.
	\end{prop}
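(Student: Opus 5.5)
We prove the half-quadratic identity by concave conjugacy in the variable $u=e^2$, substituting $\theta = \frac{1}{2\lambda}$. Since $\phi(u)=\rho(\sqrt{u})$ is concave, continuously differentiable and nondecreasing on $[0,\infty)$, it is the infimum of its affine majorants. Define the concave conjugate
\begin{equation}\nonumber
\phi^{*}(\theta) \triangleq \inf_{u\ge 0}\,\bigl[\theta u - \phi(u)\bigr], \qquad \theta>0 .
\end{equation}
Fenchel--Moreau duality for the closed convex function $-\phi$ (extended by $+\infty$ for $u<0$) gives
\begin{equation}\nonumber
\phi(u)=\inf_{\theta\ge 0}\bigl[\theta u-\phi^{*}(\theta)\bigr].
\end{equation}
Monotonicity means that only $\theta\ge 0$ matter: the slopes of the supporting lines are the values $\phi'(u)\ge 0$. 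Setting $\theta=\frac{1}{2\lambda}$ and $\psi(\lambda)\triangleq-\phi^{*}\!\left(\frac{1}{2\lambda}\right)$ turns $\theta u$ into $\frac{e^2}{2\lambda}$. This yields $\rho(e)=\inf_{\lambda>0}\bigl[\frac{e^2}{2\lambda}+\psi(\lambda)\bigr]$ at least for those $u$ whose minimizing slope is strictly positive.

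Next we identify the minimizer. For fixed $u_0=e^2$, the line $u\mapsto \phi(u_0)+\phi'(u_0)(u-u_0)$ majorizes $\phi$ by concavity and touches it at $u_0$. Hence the infimum is attained at $\theta^\star=\phi'(u_0)$. The chain rule gives $\rho'(e)=2e\,\phi'(e^2)$, so $\phi'(e^2)=\rho'(e)/(2e)$ for $e\neq 0$, and then
\begin{equation}\nonumber
\lambda^\star=\frac{1}{2\theta^\star}=\frac{e}{\rho'(e)} .
\end{equation}
For the Student's $t$ loss \eqref{stloss} this gives $\lambda^\star=\frac{\nu\tau^2+e^2}{\nu+1}$, which matches the inverse-gamma posterior-mode weight in \eqref{hdis}. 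This check confirms the constants. Alternating between this closed-form $\lambda$-update and the weighted least-squares minimization in $e$ is exactly the IRLS sequence.

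The main obstacle is the boundary behavior, which requires the strict inequality $\lambda>0$ and the restriction to $e\neq 0$.

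\emph{Case $\phi'(u_0)=0$.} Here the formula asks for $\lambda=\infty$, so the infimum is not attained for finite $\lambda$. One could assume $\phi'>0$, as holds for \eqref{stloss}, or read this case as a limit.

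\emph{Case $e=0$.} The ratio $e/\rho'(e)$ must be interpreted as $1/(2\phi'(0))$. I would handle this either by stating that interpretation explicitly or by assuming $\phi'(0)<\infty$.

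Finally, I would verify that $\psi$ is finite, i.e. $\phi^{*}(\theta)>-\infty$, for the relevant $\theta$. This follows from the fact that $\phi'$ maps onto the slopes actually used.
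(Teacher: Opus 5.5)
Your proof is correct. The paper gives no proof of this proposition; it is imported from \cite{nikolova2005analysis}, and your conjugacy argument is the standard route to results of this type.

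\textbf{Relation to the paper.} The only place the paper uses the mechanism is the majorization step in the proof of Theorem~\ref{convRWLS} (Appendix~\ref{fixproof}). There the tangent-line bound $\phi(u)\le\phi(u_0)+\phi'(u_0)(u-u_0)$ produces the reweighting $w_i=2\phi'(u_0)=1/\lambda_i$ of \eqref{weight}. This is exactly your attainment step with $\theta^\star=\phi'(u_0)$.

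\textbf{Fenchel--Moreau is more than you need.} Take $\psi(\lambda)\triangleq\sup_{u\ge 0}\bigl[\phi(u)-\frac{u}{2\lambda}\bigr]$, which is your $-\phi^{*}(\frac{1}{2\lambda})$. Then $\frac{e^2}{2\lambda}+\psi(\lambda)\ge\phi(e^2)$ holds for every $\lambda>0$ by definition. The tangent line gives equality at $\lambda=\frac{1}{2\phi'(e^2)}$, so weak duality suffices. What the conjugacy view adds is the full picture. The minimizing slopes form the superdifferential of $\phi$ at $e^2$. Also, $\psi$ is in general extended-valued: $\psi(\lambda)=+\infty$ for $\lambda>1/(2\lim_{u\to\infty}\phi'(u))$ when that limit is positive. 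This is harmless inside an infimum.

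\textbf{Small refinements.}
\begin{enumerate}
\item You use $\rho(e)=\phi(e^2)$ also for $e<0$, i.e.\ evenness of $\rho$. The proposition leaves this implicit, though the claimed identity forces it. The paper states it as the symmetry condition in the remark after Theorem~\ref{convRWLS}.
\item The condition $\phi'(0)<\infty$ is already part of the hypothesis, since $\phi$ is continuously differentiable on $[0,\infty)$.
\item At $e=0$ (with $\phi'(0)>0$), every $\lambda\in\bigl(0,\frac{1}{2\phi'(0)}\bigr]$ attains the infimum. So $e/\rho'(e)$, read as $\frac{1}{2\phi'(0)}$, is only the largest minimizer.
\item If $\phi'(u_0)=0$, the identity still holds as an unattained infimum. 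Here $\phi(u_0)=\sup\phi$, so $\phi^{*}(\theta)\ge-\phi(u_0)$ for $\theta\ge 0$. Hence $\phi(u_0)\le\theta u_0-\phi^{*}(\theta)\le\theta u_0+\phi(u_0)\to\phi(u_0)$ as $\theta\downarrow 0$.
\end{enumerate}

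\textbf{Mislabel in your sanity check.} Given $e$, the posterior of $\lambda$ is $\operatorname{IG}\bigl(\frac{\nu+1}{2},\frac{\nu\tau^2+e^2}{2}\bigr)$, whose mode is $\frac{\nu\tau^2+e^2}{\nu+3}$. Your value $\frac{\nu\tau^2+e^2}{\nu+1}$ is instead $1/\mathbb{E}[\lambda^{-1}\mid e]$, the reciprocal of the CAVI weight in Appendix~\ref{CAVIalg}, not the posterior mode. This does not affect the argument itself.
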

	\begin{lemma}[Mean Equivalence]
		\label{equivalent}
		Let $p(x)$ be a continuously symmetric and unimodal probability density function with its unique mode at $x^*$. If we approximate $p(x)$ using a Gaussian distribution $q(x) = \mathcal{N}(x; \mu, \Sigma)$, then maximizing the ELBO, or equivalently minimizing the KL divergence $\mathrm{KL}(q(x) || p(x))$, with respect to the variational mean $\mu$ yields:$$\mu^* = \arg\min_{\mu} \mathrm{KL}(q(x) || p(x)) = x^*$$where $x^* = \arg\max_x p(x)$ is the MAP estimate.
	\end{lemma}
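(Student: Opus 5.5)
The plan is to write the KL divergence as a function of $\mu$ alone, holding $\Sigma$ fixed, and then use symmetry to locate its stationary point. Split the divergence as
\begin{equation}\nonumber
\mathrm{KL}(q\|p) = -\mathcal{H}(q) - \mathbb{E}_{q}[\log p(x)].
\end{equation}
The entropy of $\mathcal{N}(\mu,\Sigma)$ depends only on $\Sigma$. Minimizing over $\mu$ is therefore the same as maximizing
\begin{equation}\nonumber
F(\mu) = \mathbb{E}_{z\sim\mathcal{N}(0,\Sigma)}\big[\log p(\mu+z)\big],
\end{equation}
which is the ELBO with its $\mu$-independent terms dropped. This is the log-density averaged over a Gaussian kernel centred at $\mu$. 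The claim then reduces to showing that $F$ is maximized at $\mu = x^*$.

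The key steps, in order, are as follows.

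First, I use the symmetry of $p$ about its mode, $p(x^*+y)=p(x^*-y)$. Together with the symmetry of the zero-mean Gaussian, this gives $F(x^*+d)=F(x^*-d)$ for every $d$. The substitution is $z\mapsto -z$, under which $\mathcal{N}(0,\Sigma)$ is invariant.

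Second, I differentiate under the integral: $\nabla F(\mu)=\mathbb{E}_q[\nabla\log p(x)]$. At $\mu=x^*$ the integrand is odd about $x^*$, because the gradient of an even function is odd, so $\nabla F(x^*)=0$. This also follows directly from the evenness of $F$ about $x^*$. So $x^*$ is a stationary point of the ELBO in $\mu$.

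Third, I show this stationary point is the global maximizer rather than a saddle or a minimum. Unimodality is what supplies this.
\begin{itemize}
\item The cleanest route is to assume $\log p$ is concave. In the paper's setting the relevant densities (Gaussian-scale mixtures, Student's $t$) are treated in the sense of the earlier Properties. Under log-concavity, $F$ is concave in $\mu$, since an average of concave functions is concave. Combined with the even symmetry about $x^*$, this gives $F(x^*)\ge \tfrac12[F(x^*+d)+F(x^*-d)] = F(x^*+d)$.
\item Without log-concavity, I would argue in one dimension along any direction $d$. Write $F(x^*+d)-F(x^*) = \mathbb{E}\big[\log p(x^*+d+z)-\log p(x^*+z)\big]$. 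Then pair the points $z$ and $-z-d$: this pairing leaves the value of $\log p$ unchanged by symmetry but reweights it by the Gaussian density. The idea is that the kernel centred at $x^*$ places more mass near the mode, where the symmetric unimodal $\log p$ is largest. This is an Anderson-type inequality.
\end{itemize}

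The main obstacle is this third step. For a merely symmetric unimodal $p$, $\log p$ need not be concave, and the averaged function $F$ could in principle have other maxima. For the general case I would try to invoke Anderson's theorem, which states that the integral of a symmetric unimodal function against a symmetric unimodal measure is maximized when the two are co-centred. I would apply it to the superlevel sets of $\log p$: each is a symmetric interval, or a convex set in several dimensions. Their Gaussian measure is maximized when the Gaussian is centred at $x^*$, and integrating over levels then yields $F(x^*)\ge F(\mu)$ for all $\mu$, hence $\mu^*=x^*$.

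One technicality is that $\log p$ may be unbounded below. I expect to handle it by truncating at a level $-M$, applying the argument to the truncated function, and letting $M\to\infty$ by monotone convergence.
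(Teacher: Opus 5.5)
Your proposal is correct, provided you state explicitly how you read ``unimodal''. It takes a different route from the paper at the decisive step.

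\textbf{Where you match and where you diverge.} Your second step is the paper's entire proof. The paper writes the ELBO as $\mathbb{E}_q[\log p(x)]+\frac12\log|\Sigma|+c$, applies Bonnet's theorem to get $\nabla_\mu\mathcal{L}=\mathbb{E}_q[\nabla_x\log p(x)]$, and observes that the score is odd about $x^*$. It then concludes by asserting that the Gaussian expectation of an odd function vanishes only when the Gaussian is centred at the symmetry point. That is false for general odd functions: $\mathbb{E}_{\mathcal{N}(\mu,\sigma^2)}[\sin x]=e^{-\sigma^2/2}\sin\mu$ vanishes at $\mu=\pi$. Moreover, a vanishing gradient alone does not distinguish a maximum from a minimum. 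Your third step replaces that assertion with a genuine global argument: Anderson's inequality on the superlevel sets of $p$, a layer-cake representation of $\mathbb{E}_q[\log p]$, and truncation at $-M$ with monotone convergence. This gives $F(x^*)\ge F(\mu)$ for all $\mu$, and it does not even require $\log p$ to be differentiable.

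\textbf{What your route costs.} In several dimensions, ``unimodal'' must mean convex superlevel sets symmetric about $x^*$ (Anderson's sense). That is stronger than having a unique mode, and it can fail for products of Student's $t$ factors. Your log-concave bullet also does not cover the Student's $t$: by Property 3 its negative log-density is non-convex outside $[-\sqrt{\nu}\tau,\sqrt{\nu}\tau]$. So it is the Anderson route that carries the proof in the heavy-tailed case.

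\textbf{The missing piece: uniqueness.} Relative to the statement $\mu^*=x^*$, your inequality only shows that $x^*$ is \emph{a} maximizer, not the unique one. In one dimension this is easy to close, and it also supplies the justification the paper omits. Let $\delta=\mu-x^*$, $s=(\log p)'$, and $\varphi$ the $\mathcal{N}(0,\Sigma)$ density. Pairing $y$ with $-y$ gives
\[
F'(\mu)=\int_0^\infty s(x^*+y)\,\bigl[\varphi(y-\delta)-\varphi(y+\delta)\bigr]\,dy .
\]
For $\delta>0$ the bracket is positive. Meanwhile $s\le 0$ on $(x^*,\infty)$ and is not a.e.\ zero there, because the mode is unique. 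Hence $F'(\mu)<0$, and by symmetry $F'(\mu)>0$ for $\mu<x^*$.

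In several dimensions you can make your layer-cake argument strict as follows. Each nontrivial superlevel set $K$ of such a continuous density is bounded, convex, symmetric about $x^*$, and has nonempty interior. For such $K$, the map $t\mapsto\mathcal{N}(x^*+t\delta,\Sigma)(K)$ is even, log-concave (by Pr\'ekopa) and real-analytic, and it tends to $0$ as $t\to\infty$. It is therefore strictly decreasing on $[0,\infty)$, which gives strict inequality level by level.
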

	\begin{pf}
		The ELBO to be maximized with respect to $\mu$ and $\Sigma$ is:
		$$\mathcal{L}(\mu, \Sigma) = \mathbb{E}_{q(x)}[\log p(x)] + \frac{1}{2}\log|\Sigma| + c,$$
		where c is a constant. Taking the gradient of the ELBO with respect to $\mu$, and utilizing the identity $\nabla_\mu \mathbb{E}_{q}[\log p(x)] = \mathbb{E}_{q}[\nabla_x \log p(x)]$ (via Bonnet's theorem or the reparameterization trick), we set it to zero, which gives
		$$\nabla_\mu \mathcal{L} = \mathbb{E}_{\mathcal{N}(x; \mu, \Sigma)}[\nabla_x \log p(x)] = 0$$
		Since $p(x)$ is symmetric and unimodal around $x^*$, the score function $\nabla_x \log p(x)$ is an odd function centered at $x^*$. For the expectation of an odd function under a symmetric Gaussian distribution to be strictly zero, the Gaussian distribution must be perfectly centered at the symmetry point of that odd function. Therefore, $\mu^* = x^*$.
	\end{pf}
	\subsection{Problem Formulation} 
	The system dynamics is as follows:
	\begin{equation}
		\begin{aligned}
			x_{k}&=A x_{k-1} + B w_k \\
			y_k &= C x_{k} +D v_k
		\end{aligned}
		\label{sys}
	\end{equation}
	where $A \in \mathbb{R}^{n\times n}$ and $C \in \mathbb{R}^{m \times n}$ are the state-transition and observation matrices, respectively; $B \in \mathbb{R}^{n \times p}$ and $D \in \mathbb{R}^{m \times m}$ map the process and measurement noises to the state and observation. The noise vectors $w_k \in \mathbb{R}^{p}$ and $v_k \in \mathbb{R}^{m}$ are element-wise uncorrelated zero-mean random variables (i.e., $E(w_{i,k} w_{j,k}) = 0$ and $E(v_{i,k} v_{j,k}) = 0$ for $i \neq j$, where $w_{i,k}$ and $v_{i,k}$ denote the $i$-th components of $w_k$ and $v_k$). Without loss of generality, we assume the nominal covariances $E(w_k w_k^{T}) = I_{p}$ and $E(v_k v_k^{T}) = I_{m}$, and define $Q_k = B I_{p} B^{T} \succeq 0$ and $R_k = D I_{m} D^{T} \succ 0$ (implying $D$ is non-singular). The pair $(A, \sqrt{Q_k})$ is controllable and $(A, C)$ is observable. The matrix $B$ has full column rank, i.e., $\text{rank}(B) = p$.
	
	In practical applications, the noise covariance may be time-varying, i.e., $E(w_k w_k^{T}) = Q_{w,k}$ and $E(v_k v_k^{T}) = R_{v,k}$ are time-varying. The associated process and measurement covariances become $Q_k=BQ_{w,k}B^{T}$ and $R_k=DR_{v,k}D^{T}$. The initial state $x_0$ is zero-mean Gaussian with known covariance matrix $P_0$, and is independent of $w_k$ and $v_k$ for all $k>0$. 
	\begin{remark}
		Unlike the simplified noise modeling in ~\cite{a3,a9}, where disturbances are directly added to the state and measurement, we employ a structural noise model. By explicitly introducing matrices $B$ and $D$, this formulation characterizes the underlying physical mechanisms through which stochastic uncertainties propagate into the system dynamics and observation channels.
	\end{remark}
	
	Our purpose is to jointly estimate $\{\hat{x}_k, \hat{Q}_k, \hat{R}_k\}$ under the governing equation \eqref{sys}.

	\section{Main Results}	
	In this section, we formulate the joint estimation problem as a two-stage sequential problem: marginalized maximum a posteriori (MMAP) estimation and one step hyper-parameter update.
	\subsection{The Proposed Method}
	The mean-field variational approximation provides a backbone for joint estimate $\{\hat{x}_k, \hat{Q}_k, \hat{R}_k\}$. However, it should be noted that the state $x_k$ depends on the process noise covariance $Q_k$, which complicates the variational inference process, as the joint posterior cannot be directly factorized into independent components. To address this, Huang et al. perform inference on $P_{k|k-1}$ rather than $Q_k$, which simplifies the problem. However, it overlooks the intrinsic structure of $Q_k$ and produces sub-optimal results. Alternatively, we introduce an auxiliary variable
	$m_k \sim \mathcal{N} (m_k^{-}, \Sigma_k) = \mathcal{N} (A x_{k-1}, AP_{k-1|k-1}A^{T}) $, termed the unperturbed prior state, representing the state excluding process noise. Thus, we have 
	\begin{equation}
		\scalebox{0.9}{$
		\begin{bmatrix}
			m_{k}^{-}\\
			0\\
			y_k
		\end{bmatrix} = \begin{bmatrix}
			I & 0\\
			I& -I\\
			0 & C
		\end{bmatrix}\begin{bmatrix}
			m_k\\
			x_k
		\end{bmatrix}+ \begin{bmatrix}
			p_{k}\\
			Bw_k  \\
			Dv_k		
		\end{bmatrix}$}
		\label{sys1}
	\end{equation} 
	where $p_k$ denotes the virtual noise, and $m_k \sim \mathcal{N}(m_k^{-},\Sigma_{Pk})$ with $m_k^{-} = A x_{k-1}$ and $\Sigma_{Pk} = A P_{k-1|k-1} A^T$. Denote
	\begin{equation}
		\scalebox{0.9}{$
		B_kB_k^{T}=\begin{bmatrix}
			B_pB_p^{T} & 0 & 0\\
			0& BB^{T} & 0\\
			0& 0   &DD^{T}
		\end{bmatrix}=\begin{bmatrix}
			\Sigma_{Pk} & 0 & 0\\
			0& Q_k & 0\\
			0& 0   &R_k
		\end{bmatrix}$}
		\label{Bk}
	\end{equation}
	where $B_p$ is obtained by Cholesky decomposition through $B_pB_p^{T}=\Sigma_{Pk}$. It follows that
	\begin{equation}
		B_k^{-1}=
		\begin{bmatrix}
			B_p^{-1} & 0 & 0\\
			0& B^\dagger & 0\\
			0& 0   &D^{-1}
		\end{bmatrix}
	\end{equation}
	where $B^\dagger \in \mathbb{R}^{p \times n}$ is the Moore-Penrose pseudo-inverse of $B$. Left multiplying $B_k^{-1}$ in both side of \eqref{sys1} gives the following \emph{whitened} regression problem:
	\begin{equation}
		t_k = W_k \chi_k + \xi_k
		\label{wreg}
	\end{equation} 
	where
	\begin{equation}
		\label{tkwk}
				\scalebox{0.9}{$
		\begin{aligned}
			t_k &= B_k^{-1}\begin{bmatrix}
				m_{k}^{-}\\
				0\\
				y_k
			\end{bmatrix}, \quad W_k=B_k^{-1}\begin{bmatrix}
				I & 0\\
				I& -I\\
				0 & C
			\end{bmatrix},\quad \chi_k=\begin{bmatrix}
				m_k\\
				x_k
			\end{bmatrix}\\
			\xi_k&=\begin{bmatrix}
				\xi_p\\
				\xi_w\\
				\xi_v
			\end{bmatrix}=B_k^{-1}
			\begin{bmatrix}
				p_{k}\\
				Bw_k  \\
				Dv_k		
			\end{bmatrix}.
		\end{aligned}$}
	\end{equation}
	Note that the prior state estimate $m_k^{-}$ is treated as a pseudo-measurement in the above equation. In the nominal case, one has $E(\xi_k\xi_k^{T})={I}_{n+p+m}$.
	\begin{remark}
		Reformulating \eqref{sys} as \eqref{wreg} offers several advantages. First, by augmenting the system dimensions, it treats the process and measurement noises in a unified manner. This is beneficial especially when $Q_k$ is rank-deficient (i.e., rank $Q_k \le n$). Second, whitening the noise as $\xi_k$ allows us to treat channels in different ways, providing flexibility in high-dimensional and complex problems. 
	\end{remark}
	
	\begin{lemma}[Identifiability]
		\label{identlemma}
		Consider the whitened regression problem in $\eqref{wreg}$ for the augmented state $\chi_k \in \mathbb{R}^{2n}$. Assume the prior covariance $\Sigma_{Pk}$ and the measurement noise covariance $R_k$ are positive definite, and $B$ has full column rank. The state $\chi_k$ has a unique solution if and only if $\mathcal{N}(B^T) \cap \mathcal{N}(C) = \{0\}$ (or equivalently $\mathrm{rank}\Big(\begin{bmatrix}
			B^{T}\\
			C
		\end{bmatrix}\Big)=n$), where $\mathcal{N}(\cdot)$ denotes the null space of a matrix.
	\end{lemma}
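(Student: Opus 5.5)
The plan is to interpret ``unique solution'' in the least-squares (equivalently Gaussian MAP) sense for the whitened model \eqref{wreg}. Then $\chi_k$ is the minimizer of $\|t_k - W_k\chi_k\|^2$, and this minimizer is unique if and only if $W_k^TW_k$ is invertible, i.e., if and only if $W_k$ has full column rank $2n$, i.e., $\mathcal{N}(W_k)=\{0\}$. So I will reduce the lemma to computing the null space of $W_k$ explicitly. This reduction (uniqueness $\Leftrightarrow$ trivial kernel) is standard. The only point needing care is stating that identifiability refers to this least-squares/normal-equation formulation, since the weighted problem is strictly convex exactly when the kernel is trivial.

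Next I multiply out the blocks. From \eqref{tkwk},
\begin{equation}
W_k=\begin{bmatrix} B_p^{-1} & 0\\ B^\dagger & -B^\dagger\\ 0 & D^{-1}C\end{bmatrix}.
\end{equation}
Suppose $W_k\chi=0$ with $\chi=[m^T,\,x^T]^T$.
\begin{itemize}
\item Since $\Sigma_{Pk}\succ0$, its Cholesky factor $B_p$ is invertible, so the first block row gives $m=0$.
\item The second block row then reduces to $B^\dagger x=0$. Because $B$ has full column rank, $B^\dagger=(B^TB)^{-1}B^T$ with $B^TB$ invertible, so $\mathcal{N}(B^\dagger)=\mathcal{N}(B^T)$.
\item Since $R_k\succ0$ forces $D$ to be nonsingular, the third block row gives $Cx=0$.
\end{itemize}
Hence
\begin{equation}
\mathcal{N}(W_k)=\left\{(0,x): x\in\mathcal{N}(B^T)\cap\mathcal{N}(C)\right\}.
\end{equation}
It follows that $W_k$ has a trivial kernel if and only if $\mathcal{N}(B^T)\cap\mathcal{N}(C)=\{0\}$, which proves both directions at once. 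For the ``only if'' direction, any nonzero $x$ in the intersection produces a nonzero kernel vector $(0,x)$. Adding it to any minimizer leaves the residual unchanged, so the solution is not unique.

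Finally, the rank reformulation follows from
\begin{equation}
\mathcal{N}\Big(\begin{bmatrix}B^T\\ C\end{bmatrix}\Big)=\mathcal{N}(B^T)\cap\mathcal{N}(C).
\end{equation}
By rank--nullity on this matrix, which has $n$ columns, the intersection is trivial if and only if $\mathrm{rank}\big([B^T;\,C]\big)=n$. I do not expect a real obstacle here. The only delicate step is justifying $\mathcal{N}(B^\dagger)=\mathcal{N}(B^T)$, which rests on the full-column-rank assumption on $B$ and the resulting closed form of the pseudo-inverse. A secondary point is noting that invertibility of $B_p$ and $D$ comes from the positive-definiteness assumptions on $\Sigma_{Pk}$ and $R_k$.
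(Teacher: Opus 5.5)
Your proposal is correct and follows essentially the same route as the paper: both reduce identifiability to $\mathcal{N}(W_k)=\{0\}$, read off $m=0$, $B^\dagger x=0$ and $Cx=0$ from the three block rows, and use $B^\dagger=(B^TB)^{-1}B^T$ to conclude $\mathcal{N}(B^\dagger)=\mathcal{N}(B^T)$. Your additions make explicit what the paper leaves implicit: the justification that a unique least-squares minimizer is equivalent to a trivial kernel, the converse that any nonzero $(0,x)$ with $x$ in the intersection lies in the kernel, and the rank--nullity step for the rank reformulation.
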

	The proof is provided in Appendix \ref{identproof}. Throughout this work, we restrict our attention to systems that satisfy the condition stated in Lemma \ref{identlemma}.
	
	We define $e_k \triangleq\xi_k=t_k- W_k \chi_k$ and denote
	\begin{equation}
		t_k=\begin{bmatrix}
			t_p\\
			t_w\\
			t_v
		\end{bmatrix}, e_k=\begin{bmatrix}
			e_p\\
			e_w\\
			e_v
		\end{bmatrix}, \Sigma_{\chi}= \begin{bmatrix}
			I_n & 0&0\\
			0& \Sigma_{Qk}&0\\
			0& 0& \Sigma_{Rk}
		\end{bmatrix}
	\end{equation}
	where $\Sigma_{Qk}$ and $\Sigma_{Rk}$ are two diagonal matrices to be estimated, reflecting the characteristic of time-varying $w_k$ and $v_k$. Recall that our purpose is to jointly estimate both the state and associated unknown covariance matrices, we need to assign a prior distribution for $e_w$ and $e_v$ to avoid over-fitting~\cite{c1}. To proceed, we model them as element-wise independent Student’s $t$-distributions as follows:
	\begin{equation}
		\begin{aligned}
			e_w&= [e_{w,1},\ldots,e_{w,i},\ldots,e_{w,p}]^{T},\\
			e_v&= [e_{v,1},\ldots,e_{v,j},\ldots,e_{v,m}]^{T}. 
		\end{aligned}
	\end{equation} 
	Each element in the above vector follows a zero-mean Student’s $t$-distribution, i.e., 
	\begin{align}
		e_{w,i} &\sim \operatorname{St}(e_{w,i} \mid 0, \tau_i^2, \nu_i), \quad i = 1, 2, \ldots, p, \\
		e_{v,j} &\sim \operatorname{St}(e_{v,j} \mid 0, \tau_j^2, \nu_j), \quad j = 1, 2, \ldots, m,
	\end{align}
	where $\nu_i$ and $\nu_j$ denote the degrees of freedom (DOF) controlling the tail heaviness of the corresponding distributions, and $\tau_i^2$ and $\tau_j^2$ are the associated nominal variances. Note that although $e_p \sim \mathcal{N}(0,I_n)$ is Gaussian. Based on Property \ref{pro2}, it can be regarded as a  Student’s $t$-distribution as follows:
	\begin{equation}
		\begin{aligned}
			e_{p,i} &\sim \operatorname{St}(e_{p,i} \mid 0, \tau_i^2, \nu_i), \quad i = 1, 2, \ldots, n, \\
		\end{aligned}
	\end{equation} 
	with $\nu_i \to \infty$. This allows us to treat the different elements of $e_p$ in a uniform way. Applying Bayes' rule alongside the Student's $t$-distribution properties from \eqref{stdis}, the joint posterior can be factorized as follows:
	\begin{equation}
		\begin{aligned}
			p&(\chi_k, \Sigma_{Qk}, \Sigma_{Rk} \mid t_{1:k}) = \frac{p(\chi_k, \Sigma_{Qk}, \Sigma_{Rk}, t_k \mid t_{1:k-1})}{p(t_k \mid t_{1:k-1})} \\
			&\propto p(t_k, \chi_k \mid \Sigma_{Qk}, \Sigma_{Rk}) p(\Sigma_{Qk}) p(\Sigma_{Rk}),
		\end{aligned}
		\label{map}
	\end{equation}
	where $p(t_k \mid t_{1:k-1})$ is the marginal likelihood obtained by integrating out the optimization variables, and thus acts merely as a normalizing constant. Observe that the Gaussian prior $p(\chi_k \mid t_{1:k-1})$ is already equivalently represented as a pseudo-measurement embedded within the first block of the whitened regression model in \eqref{wreg}. Consequently, the prior is inherently absorbed into the augmented likelihood term $p(t_k \mid \chi_k, \Sigma_{Qk}, \Sigma_{Rk})$, removing the need for an explicit, separate prior term to avoid double-counting. By denoting $\Theta=\{\chi_k, \Sigma_{Qk},\Sigma_{Rk}\}=\{\chi_k, \lambda^{wv}\}$, we rewrite the joint posterior in a compact way as
	\begin{equation}
		\begin{aligned}
			p&(\Theta|t_{1:k})\propto p(\Theta,t_k|t_{1:k-1})\\
			=& \mathcal{N}(t_k;W_k\chi_k, \Sigma_{\chi})\prod_{i=1}^{n+p+m}\operatorname{IG}(\lambda_i| \frac{\nu_i}{2},\frac{\nu_i \tau_i^2}{2}).
		\end{aligned}
		\label{map1}
	\end{equation}
	The exact solution for \eqref{map1} is intractable due to the coupling between $\chi_k$ and $\lambda^{wv}$. Thus, we resort to variational Bayesian inference to construct a tractable approximation. Specifically, by the mean-field assumption, we assume the target distribution follows 
	\begin{equation}
		p(\Theta|t_{1:k}) \approx q(\chi_k,\lambda^{wv})=q(\chi_k)\prod_{1}^{p+m}q(\lambda_i^{wv}),
	\end{equation} 
	where $q(\chi_k)$ is a Gaussian distribution and $\lambda_i^{wv}$ is diagonal inverse-Gamma variational factors. Then, we minimize the Kullback-Leibler (KL) divergence between the separable approximation and the true posterior:
	\begin{equation}
		\begin{aligned}
			\mathrm{KL}&(q(\chi_k,\lambda^{wv})||p(\Theta|t_{1:k}))\\
			&=\iint q(\chi_k,\lambda^{wv}) \ln \left( \frac{q(\chi_k,\lambda^{wv})}{p(\Theta|t_{1:k})} \right) d\chi_k d\lambda^{wv},
		\end{aligned}
		\label{kldiv}
	\end{equation}
	which is equivalent to maximizing the following evidence lower bound (ELBO)~\cite{blei2017variational}:
	\begin{equation}
		\begin{aligned}
			\mathcal{L}(q) &= \mathbb{E}_q\left[\log p(t_k \mid \chi_k, \lambda^{wv})\right]+ \mathbb{E}_q\left[\log p(\lambda^{wv})\right]\\ 
			&- \mathbb{E}_q\left[\log q(\chi_k)\right] - \mathbb{E}_q\left[\log q(\lambda^{wv})\right].
		\end{aligned}
		\label{elbo0}
	\end{equation} 
	Maximizing the ELBO with respect to $\chi_k$ and $\lambda^{wv}$ gives:
	\begin{subequations}
		\begin{align}
			\log q(\chi_k) &\propto \left( \mathbb{E}_{q(\lambda^{wv})} \left[ \log p(t_k \mid \chi_k, \lambda^{wv}) \right] \right), \label{vbsub1}\\
			\log q(\lambda^{wv}) &\propto \left( \mathbb{E}_{q(\chi_k)} \left[ \log p(t_k \mid \chi_k, \lambda^{wv}) + \log p(\lambda^{wv}) \right] \right) \label{vbsub2}.
		\end{align}
		\label{vbfull}
	\end{subequations}
	The coupled equations in \eqref{vbfull} can be solved iteratively via coordinate-ascent variational inference (CAVI), see details in Appendix \ref{CAVIalg}. The full iterative procedure is computationally expensive and tightly couples the state estimation with covariance tracking at every iteration. To decouple this and alleviate the computational burden, we resort to a computationally efficient approximation. Initially, we marginalize $\lambda^{wv}$ out from the joint distribution \eqref{map1}, which gives   
	\begin{equation}
		\begin{aligned}
			&p(\chi_k, t_k | t_{1:k-1}) = \int p(\Theta, t_k | t_{1:k-1}) d\lambda_1 \dots d\lambda_{n+p+m} \\
			&\propto\prod_{i=1}^{n+p+m} \left( (t_{k,i} - (W_k\chi_k)_i)^2 + \nu_i\tau_i^2 \right)^{-\frac{\nu_i + 1}{2}},
		\end{aligned}
		\label{mposteriori}
	\end{equation}
	which is a product of Gaussian distributions and heavy-tailed Student's $t$-distributions. Then, the KL divergence is redefined to evaluate the difference exclusively between the approximate posterior $q(\chi_k)$ and the true marginalized posterior $p(\chi_k|t_{1:k})$:
	\begin{equation}
		\begin{aligned}\mathrm{KL}&(q(\chi_k)||p(\chi_k|t_{1:k}))=\int q(\chi_k) \ln \left( \frac{q(\chi_k)}{p(\chi_k|t_{1:k})} \right) d\chi_k.
		\end{aligned}
	\end{equation}
	The associated ELBO is:
	\begin{equation}
		\begin{aligned}
			\mathcal{L}(q(\chi_k)) &= \mathbb{E}_{q(\chi_k)}\left[\log p(\chi_k, t_k \mid t_{1:k-1})\right] \\
			&- \mathbb{E}_{q(\chi_k)}\left[\log q(\chi_k)\right].
		\end{aligned}
		\label{elbo_marginalized}
	\end{equation}
	
	It is worth noting that the marginalized posterior in \eqref{mposteriori} is generally asymmetric. Specifically, by evaluating the third-order derivative tensor of the log-posterior at the MAP estimate $\chi^*$, the local skewness along any direction $\Delta$ is driven by the coefficient $k_i \propto e_i (e_i^2 - 3\nu_i\tau_i^2) / (\nu_i\tau_i^2 + e_i^2)^3$, where $e_i = t_{k, i} - (W_k\chi^*)_i$ is the steady-state residual. This reveals that the skewness vanishes as either $\nu_i\tau_i^2 \gg  e_i^2$ or $\nu_i\tau_i^2 \ll  e_i^2$ (Practical applications are dominated by these two regimes). Under the symmetric assumption, by applying Lemma \ref{equivalent}, we convert \eqref{elbo_marginalized} to the following marginalized MAP (MMAP) optimization problem:
	\begin{equation}
	\begin{aligned}
		\hat{\chi}_k &= \arg\max_{\chi_k} \log p(\chi_k, t_k \mid t_{1:k-1})\\
		&= \arg\min_{\chi_k} \sum_{i=1}^{n+p+m} \underbrace{\frac{\nu_i + 1}{2} \log \left( 1 + \frac{e_i^2}{\nu_i\tau_i^2} \right)}_{J_i},
	\end{aligned}
	\label{mapmargalized}
	\end{equation}
	where $e_i= t_{k,i} -(W_k\chi_k)_i$. Note that the posterior covariance for \eqref{elbo_marginalized} does not have a closed-form solution. In addition, the exact solution for \eqref{elbo_marginalized} is slightly \emph{over-confident} due to the zero-force effect caused by $\mathrm{KL}(q||p)$, extensive discussion on $\mathrm{KL}(q||p)$ versus $\mathrm{KL}(p||q)$ is available on \cite{pml2Book}. To simplify the solution, we apply Laplace approximation as  $q(\chi_k) \approx \mathcal{N}(\hat{\chi}_k, \hat{P}_{\chi_k})$ around the MAP point, where $\hat{P}_{\chi_k})= \left( W_k^T \Lambda_{\text{Lap}} W_k \right)^{-1}$ with $\Lambda_{\text{Lap}, ii} = \left. \frac{\partial^2}{\partial e_i^2} \left[ \frac{\nu_i + 1}{2} \log \left(J_i \right) \right] \right|_{e_i = \hat{e}_i}$, proving a sufficiently good starting point. Subsequently, we advocate for updating $q(\lambda^{wv})$ via \eqref{vbsub2} in a single step. The entire algorithm, called the hierarchical variational estimator (HVE), is summarized in Algorithm \ref{hve}. A graphical comparison between standard CAVI and the proposed HVE is shown in Fig. \ref{visualize}. As demonstrated by the 1D example, the proposed method achieves similar performance with just a single $q(\lambda)$ update compared to multiple iterations of conventional CAVI.
	\begin{algorithm}[h]
		\setstretch{0.8} 
		\caption{HVE}
		\label{hve}
		\small
		\begin{algorithmic}[1]
			\State \textbf{Step 1: MMAP}\\
			Solve \eqref{mapmargalized} accompanied with Laplace approximation to obtain $\mathcal{N}(\hat{\chi}_k, \hat{P}_{\chi_k})$.
			\State \textbf{Step 2: Update $q(\lambda^{wv})$}\\
			Update $q(\lambda^{wv})$ based on \eqref{vbsub2}. 
		\end{algorithmic}
	\end{algorithm}
	\begin{remark}
		Geometrically, standard CAVI suffers from slow, zig-zag convergence trajectories due to the tightly coupled, skewed contour of the joint posterior. Our strategy circumvents this by utilizing optimization (MAP) to directly navigate to the mode (or equivalently, the mean of the marginalized posterior distribution), employing the Laplace method to capture the local curvature, and executing a single variational step to align the marginalized distributions. It is worth mentioning that after updating $q(\lambda^{wv})$, one optional action is to update $q(\chi_k)$ based on \eqref{vbsub1}. However, this procedure usually can be neglected since the MMAP solution usually provides sufficiently good approximations as demonstrated in Fig. \ref{visualize}. 
	\end{remark}
	
	\begin{figure}[htbp]
		\centering 	
		\subfigure[Contour path]{
			\begin{minipage}[t]{0.47\linewidth}
				\centering
				\includegraphics[width=1.0\columnwidth]{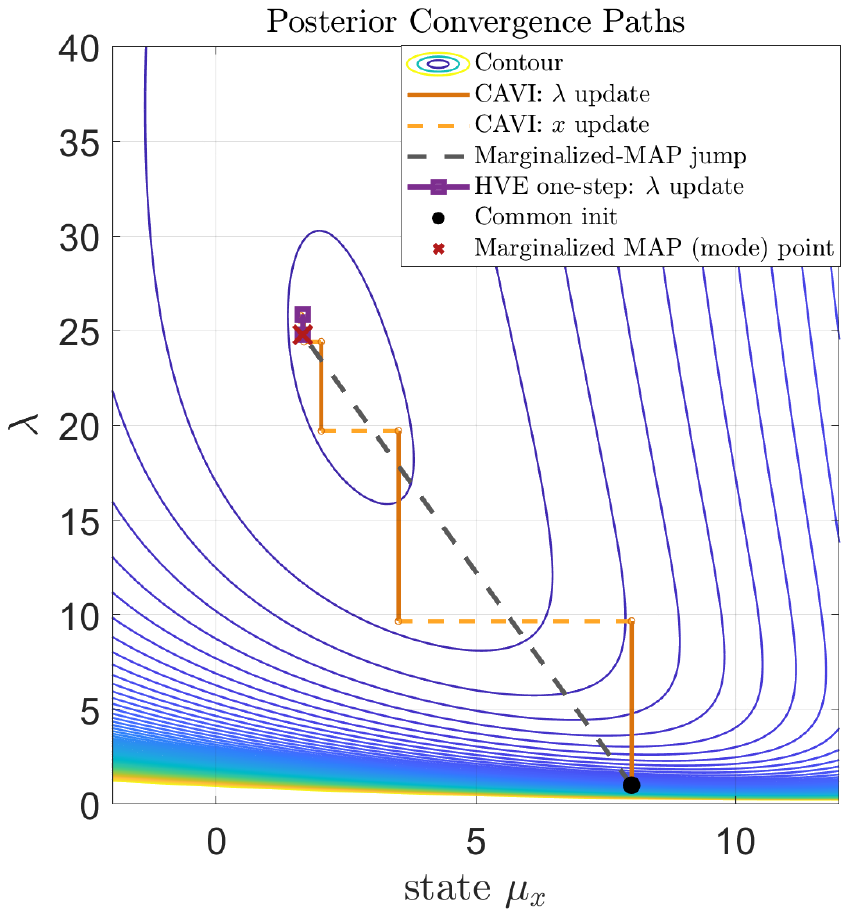}
				\label{nkf_d}
			\end{minipage}%
		}%
		\subfigure[PDF comparison]{
			\begin{minipage}[t]{0.50\linewidth}
				\centering
				\includegraphics[width=1.0\columnwidth]{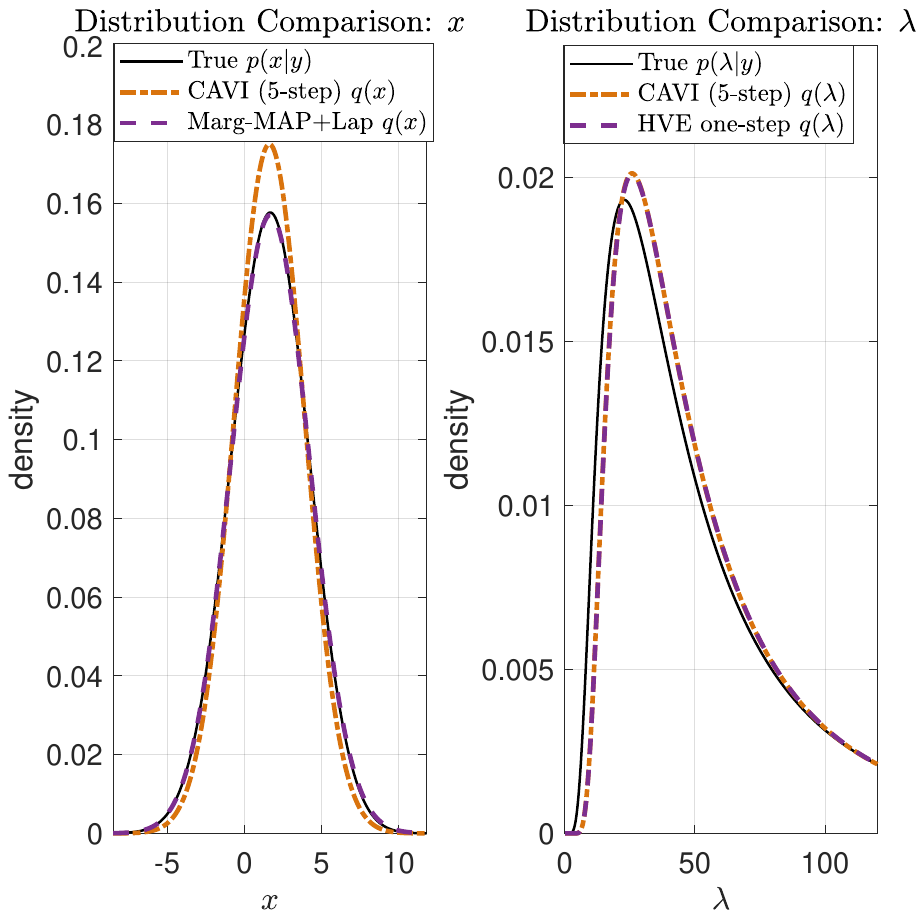}
				\label{akf_d}
			\end{minipage}%
		}%
		\caption{Comparison of CAVI and marginalized MAP+1-step inference methods in a 1D example. The model assumes a Gaussian likelihood $y \sim \mathcal{N}(x, \lambda)$ for a single measurement $y = 12.0$, with priors $x \sim \mathcal{N}(0, 6.0)$ and $\lambda \sim \text{IG}(1.5, 21.0)$. The CAVI is executed based on \eqref{vbfull} with 5 iterations. The proposed method is executed based on Algorithm \ref{hve}. (a) Visualization of the convergence paths of different methods with the same initial point. (b) The resulting approximate distributions. One observes that CAVI is slightly overconfident regarding the $q(x)$ due to the zero-forcing effect of $\mathrm{KL}(q||p)$, which is avoided by the MMAP.}
		\label{visualize}
	\end{figure}
	
	\subsection{Algorithm Derivation}
	This subsection provides details for Algorithm \ref{hve} under the whitened regression problem \eqref{wreg}. Based on \eqref{mapmargalized}, we define the following objective function:
	\begin{equation}
		\mathcal{J}(\chi_k) = \sum_{i=1}^{n+p+m} \frac{\nu_i + 1}{2} \log \left( 1 + \frac{e_i^2}{\nu_i\tau_i^2} \right).	
		\label{rbobj}
	\end{equation}	
	Recalling Proposition \ref{prop_hq}, the robust objective \eqref{rbobj} can be transformed into a sequence of tractable surrogate problems by introducing an auxiliary optimization variable $\lambda_i > 0$ for each residual term $e_i$. Thus, we have:
	\begin{equation}
		\mathcal{J}(\chi_k) = \inf_{\lambda_i > 0} \sum_{i=1}^{n+p+m}\left[ \frac{e_i^2}{2\lambda_i} + \psi(\lambda_i) \right],
	\end{equation}
	where the optimum of the augmented dual function is achieved at:
	\begin{equation}
		\lambda_i = \frac{e_i}{\rho'(e_i)} = \frac{\nu_i\tau_i^2 + e_i^2}{\nu_i + 1}.
	\end{equation}
	Consequently, minimizing the original objective function $\mathcal{J}(\chi_k)$ is mathematically equivalent to solving an iteratively reweighted least squares (IRLS) problem. Let $j$ denote the iteration index. At each iteration, we define the equivalent scalar weight as the inverse of the auxiliary variable, $w_i^{(j)} \triangleq (\lambda_i^{(j)})^{-1}$, which is computed based on the current state estimate $\chi_k^{(j)}$:
	\begin{equation}
		w_i^{(j)} = \frac{\nu_i + 1}{\nu_i\tau_i^2 + \left(e_i^{(j)}\right)^2}.
		\label{weight}
	\end{equation}
	By assembling these scalar weights into a diagonal weighting matrix $\Lambda_k(e_k^{(j)}) \triangleq \operatorname{diag}\left[w_1^{(j)}, \ldots, w_{n+p+m}^{(j)}\right]$, the non-linear optimization problem is solved by the following normal equations:
	\begin{equation}
		\chi_k^{(j+1)} = \left(W_k^{T}\Lambda_k(e_k^{(j)})W_k\right)^{-1} W_k^{T}\Lambda_k(e_k^{(j)})\,t_k,
		\label{solu1}
	\end{equation}
	until the sequence $\{\chi_k^{(j)}\}$ converges to the optimal estimate.
	
	\begin{remark}[Structural Equivalence]
		By comparing the MMAP solution in \eqref{solu1} and the CAVI in \eqref{wei}, we notice they share a similar least squares structure, differing in the handling of the weighting matrix. The two methods become equivalent when the posterior variance of the residual is negligible, such that $\mathbb{E}_{q(\chi_k)}[e_i^2] = {e}_i^2$. In applications, the residuals are generally significantly larger than the a posteriori error covariance, i.e., $\mathbb{E}_{q(\chi_k)}[e_i^2] \approxeq {e}_i^2$. This demonstrates that the MMAP solution provides a good approximation for the state mean of the KL divergence optimization. 
	\end{remark}
	\begin{remark}[Computational Efficiency]
		The MMAP offers computational efficiency compared to CAVI, as shown in \eqref{CAVIalg}. In CAVI, the posterior error covariance $P_{\chi_k}$ must be recomputed at each iteration; however, this step is bypassed in the proposed framework. Furthermore, our sequential pipeline, in which $\chi_k$ is obtained prior to updating the prior distribution parameters, provides the flexibility to seamlessly integrate robust and adaptive estimation techniques, as demonstrated in \cite{li2025variational}.
	\end{remark}
	\begin{theorem}[Monotonic Convergence]
		\label{convRWLS}
		Consider the objective function $\mathcal{J}(\chi_k)$ defined in \eqref{rbobj} and the fixed-point iteration given by \eqref{solu1}, denoted as the map $\chi_k^{(j+1)} = \mathcal{T}(\chi_k^{(j)})$. The sequence $\{\mathcal{J}(\chi_k^{(j)})\}_{j \ge 0}$ is monotonically non-increasing and converges to a stationary point of $\mathcal{J}(\chi_k)$.
	\end{theorem}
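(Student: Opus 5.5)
This is a majorization–minimization (MM) argument built on the concavity of each $J_i$ in $u_i = e_i^2$ (Property 4). For the Gaussian channels with $\nu_i\to\infty$, $J_i$ is simply quadratic, so the argument still applies.

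\textbf{Step 1 (surrogate).} Since $\phi_i(u) = \frac{\nu_i+1}{2}\log\bigl(1+\frac{u}{\nu_i\tau_i^2}\bigr)$ is concave and differentiable, its tangent line at the current point majorizes it:
\begin{equation}
\phi_i(u) \le \phi_i(u^{(j)}) + \phi_i'(u^{(j)})\,(u-u^{(j)}), \quad \phi_i'(u^{(j)}) = \frac{w_i^{(j)}}{2},
\end{equation}
where $w_i^{(j)}$ is the weight defined in \eqref{weight}. Summing over $i$ gives the surrogate
\begin{equation}
\mathcal{Q}(\chi\mid\chi^{(j)}) = \mathcal{J}(\chi^{(j)}) + \tfrac12\sum_i w_i^{(j)}\bigl(e_i(\chi)^2 - e_i(\chi^{(j)})^2\bigr),
\end{equation}
which satisfies $\mathcal{Q}\ge\mathcal{J}$ everywhere, with equality at $\chi^{(j)}$. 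The same conclusion follows from the half-quadratic form in Proposition \ref{prop_hq}: fixing $\lambda_i = 1/w_i^{(j)}$ attains the infimum at $\chi^{(j)}$.

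\textbf{Step 2 (minimization of the surrogate).} The surrogate $\mathcal{Q}$ is a weighted least-squares objective in $\chi$. All weights $w_i^{(j)}$ are strictly positive and bounded by $(\nu_i+1)/(\nu_i\tau_i^2)$. Lemma \ref{identlemma} gives that $W_k$ has full column rank, so $W_k^T\Lambda_k W_k \succ 0$ and \eqref{solu1} is the unique minimizer of $\mathcal{Q}$. The MM chain then follows:
\begin{equation}
\mathcal{J}(\chi^{(j+1)}) \le \mathcal{Q}(\chi^{(j+1)}\mid\chi^{(j)}) \le \mathcal{Q}(\chi^{(j)}\mid\chi^{(j)}) = \mathcal{J}(\chi^{(j)}),
\end{equation}
so the sequence $\{\mathcal{J}(\chi_k^{(j)})\}$ is non-increasing. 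Since $\mathcal{J}\ge 0$, it converges.

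\textbf{Step 3 (stationarity).} This is the main obstacle: showing that limit points of the iterates are stationary.

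First, the iterates are bounded. The Gaussian prior channel ($e_p$ is quadratic in $m_k$) and the $w$-channel tie $x_k$ to $m_k$. By the identifiability condition, $\mathcal{J}$ is coercive, because every direction of $\chi$ makes some residual grow. Hence the sublevel set $\{\mathcal{J}\le\mathcal{J}(\chi^{(0)})\}$ is compact. If coercivity is delicate because the log loss grows only logarithmically, it still suffices that each $|e_i|\to\infty$ drives $\mathcal{J}\to\infty$, which holds since $\log$ is unbounded.

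Second, the map $\mathcal{T}$ is continuous, since the weights are continuous and the matrix inverse is continuous. On the compact set the eigenvalues of $W_k^T\Lambda W_k$ are uniformly bounded below, so the surrogate is uniformly strongly convex. This gives the sufficient-decrease bound
\begin{equation}
\mathcal{J}(\chi^{(j)}) - \mathcal{J}(\chi^{(j+1)}) \ge c\,\|\chi^{(j+1)}-\chi^{(j)}\|^2,
\end{equation}
and therefore $\|\chi^{(j+1)}-\chi^{(j)}\|\to 0$.

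Finally, take a limit point $\bar\chi$. Continuity gives $\mathcal{T}(\bar\chi)=\bar\chi$. A fixed point satisfies $W_k^T\Lambda_k(\bar e)(t_k - W_k\bar\chi) = 0$. Since $\nabla\mathcal{J} = -W_k^T\Lambda_k(e)\,e$, this is exactly $\nabla\mathcal{J}(\bar\chi)=0$. Alternatively, one can invoke Zangwill's global convergence theorem, using $\mathcal{J}$ as the descent function.
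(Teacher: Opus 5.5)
Your Steps 1 and 2 are the paper's proof. The paper majorizes $\phi(u)$ by its tangent at $u_i^{(j)}=e_i^2(\chi_k^{(j)})$, identifies the minimizer of the surrogate with the weighted least-squares step \eqref{solu1}, writes the same MM chain, and concludes from $\mathcal{J}\ge 0$ that the objective values converge. The paper stops there, so its clause ``converges to a stationary point'' is asserted but never argued. Your Step 3 is what actually proves it, and it checks out:
\begin{itemize}
\item Full column rank of $W_k$ (Lemma \ref{identlemma}) gives $\|e(\chi_k)\|\ge\sigma_{\min}(W_k)\|\chi_k\|-\|t_k\|$. Since every $J_i$ is nonnegative, even in $e_i$ and unbounded, $\mathcal{J}$ is coercive, so all iterates lie in a compact sublevel set.
\item On that set the weights are bounded below. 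Because $\chi_k^{(j+1)}$ exactly minimizes a quadratic with Hessian $W_k^T\Lambda_k(e_k^{(j)})W_k$, you get $\mathcal{J}(\chi_k^{(j)})-\mathcal{J}(\chi_k^{(j+1)})\ge\tfrac12(\chi_k^{(j+1)}-\chi_k^{(j)})^T W_k^T\Lambda_k(e_k^{(j)})W_k(\chi_k^{(j+1)}-\chi_k^{(j)})\ge c\|\chi_k^{(j+1)}-\chi_k^{(j)}\|^2$.
\item $\mathcal{T}$ is continuous on all of $\mathbb{R}^{2n}$, since the weights are strictly positive everywhere.
\item Fixed points of $\mathcal{T}$ are exactly the zeros of $\nabla\mathcal{J}=-W_k^T\Lambda_k(e_k)e_k$.
\end{itemize}
You also get two details right that the paper glosses over. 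First, $\phi_i'(u)=w_i/2$; the paper equates $\phi'$ with the weight itself, a harmless factor of two. Second, for the Gaussian channels $\phi_i$ is linear rather than strictly concave, which is fine because the tangent bound only needs concavity.

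Be precise about what this yields: every limit point $\bar\chi$ of the iterates is stationary, and $\mathcal{J}(\chi_k^{(j)})\downarrow\mathcal{J}(\bar\chi)$. Write that equality out; it follows from monotonicity plus continuity along a convergent subsequence. That is the sensible reading of the theorem. It does not give convergence of the whole iterate sequence, because $\|\chi_k^{(j+1)}-\chi_k^{(j)}\|\to 0$ alone does not imply it.

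If you want full convergence, add a Kurdyka--{\L}ojasiewicz argument. $\mathcal{J}$ is real-analytic. The surrogate's optimality condition $W_k^T\Lambda_k(e_k^{(j)})e_k^{(j+1)}=0$ gives $\nabla\mathcal{J}(\chi_k^{(j+1)})=-W_k^T\bigl[\Lambda_k(e_k^{(j+1)})-\Lambda_k(e_k^{(j)})\bigr]e_k^{(j+1)}$. Lipschitz continuity of the weights then gives $\|\nabla\mathcal{J}(\chi_k^{(j+1)})\|\le C\|\chi_k^{(j+1)}-\chi_k^{(j)}\|$. Combined with your sufficient-decrease bound, this yields finite length of the iterate path and hence convergence to a single stationary point.
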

	The proof is available in Appendix \ref{fixproof}.	
	\begin{remark}
		The convergence guarantee established in Theorem \ref{convRWLS} can be extended to a broader class of robust M-estimators characterized by the objective $\mathcal{J}(\chi_k) = \sum_{i} \rho(e_i(\chi_k))$, provided the loss function $\rho(\cdot)$ satisfies the following half-quadratic conditions:
		\begin{enumerate}
			\item \textbf{Symmetry:} $\rho(e_i) = \rho(-e_i)$ for all $x \in \mathbb{R}$.
			\item \textbf{Regularity:} $\rho(e_i)$ is continuously differentiable.
			\item \textbf{Concavity:} The function $\phi(u) = \rho(\sqrt{u})$ is concave on the domain $u \in [0, \infty)$.
		\end{enumerate}
		The proof is similar to that of Theorem \ref{convRWLS} and is therefore omitted. This class encompasses several commonly used robust loss functions, including the Geman-McClure~\cite{barron2019general}, Welsch (also referred to as multi-kernel correntropy loss~\cite{a9}), smoothed Huber loss, and fair loss; see details in Appendix \ref{fixproof}. 
	\end{remark}

	By applying matrix inversion lemma [1], \eqref{solu1} can be identically expressed in a Kalman update format. To proceed, we denote $\bar{C}=[0, C]$ and ${\chi}_{k,0}^{+}={\chi}_{k}^{-}$ as the initial state estimation at $t=0$ iteration. Subsequently, for $t \ge 1$, we have (see details in \cite{a9,li2021multi})
	\begin{equation}
		\begin{aligned}
			\chi_{k,t}^{+}&=\chi_k^{-}+\tilde{K}_{k,t}({y}_k-\bar{C}\chi_k^{-})\\
			\tilde{K}_{k,t}&=\tilde{P}_{\chi k}^{-}\bar{C}^{T}(\bar{C}\tilde{P}_{\chi k}^{-}\bar{C}^{T}+\tilde{R}_{k})^{-1}\\
			\tilde{P}_{\chi k}^{-} &= \begin{bmatrix}
				\Sigma_{Pk}&\Sigma_{Pk}\\
				\Sigma_{Pk}&\Sigma_{Pk}+\tilde{Q}_{ k}
			\end{bmatrix}\\
			\tilde{Q}_{ k}& = B \Lambda_w^{-1} B^T\\
			\Lambda_{w}&=\operatorname{diag}[d_{\nu_{n+1}}(e_{n+1}),\ldots,d_{\nu_{n+p}}(e_{n+p})]\\
			\tilde{R}_{k}&=D{\Lambda}_{v}^{-1}D^{T}\\	\Lambda_{v}&=\operatorname{diag}[d_{\nu_{n+p+1}}(e_{n+p+1}),\ldots,d_{\nu_{n+p+m}}(e_{n+p+m})]\\
			e_k &= t_k - W_k \chi_k^{t-1}
		\end{aligned}
		\label{update}
	\end{equation} 
	where the cross term of $\tilde{P}_{\chi k}^{-}$ comes from the fact that $E[(m_k-m_k^{-})(x_k-x_k^{-})^{T}]=E[(-p_k)(-p_k+w_k)^{T}]= \Sigma_{P_k}$. This process is iterated until convergence, subject to a predefined termination condition or a maximum number of iterations. At the final iteration, one can update the posteriori error covariance as:
	\begin{equation}
		P_{\chi k}^{+} = (I - \tilde{K}_{k,t} \bar{C}) {P}_{\chi k}^{-} (I - \tilde{K}_{k,t} \bar{C})^T + \tilde{K}_{k,t} {R}_{k} \tilde{K}_{k,t}^{T}.
		\label{errcov}
	\end{equation}
	The complete algorithm, named as MMAP estimator, is summarized in Algorithm \ref{Alg1}.
	An alternative expression of \eqref{update} and \eqref{errcov}, which update $m_k$ and $x_k$ explicitly,  is also provided in Appendix \ref{alternate}.
	\begin{algorithm}[bt]
		\setstretch{0.8} 
		\caption{MMAP}
		\label{Alg1}
		\small
		\begin{algorithmic}[1]
			\State \textbf{Step 1: Initialization}\\
			Choose $\nu_i$, $\tau_i^2$ for channel $i$, and select the maximum iteration $m_{iter}$ and a threshold $\varepsilon$.
			\State \textbf{Step 2: State Prediction}\\
			${x}_{k}^{-}=A \hat{{x}}_{k-1}^{+}$, ${m}_{k}^{-}=A \hat{{x}}_{k-1}^{+}$\\
			Obtain ${B}_{k}$ according to \eqref{Bk}\\
			${\chi}_{k,0}^{+}={\chi}_{k}^{-}$\\
			Obtain $t_k$ and $W_k$ through \eqref{tkwk}
			\State \textbf{Step 3: State Update}
			\While{$\frac{\left\|\chi_{k,t}^{+}-\chi_{k,t-1}^{+}\right\|}{\left\|\chi_{k,t}^{+}\right\|}>\varepsilon$ and $t \le m_{iter}$}
			\State Execute \eqref{update}
			\State $t=t+1$
			\EndWhile
			\State Update ${P}_{k}^{+}$ using \eqref{errcov}
			
			\State \textbf{Step 4: Recursion}
			\State Set $k \leftarrow k+1$ and return to Step 2.
		\end{algorithmic}
	\end{algorithm}
	
	Let $\mathcal{I} = \{n+1, \dots, n+p+m\}$ denote the set of \emph{inference} channels selected for covariance inference, and $\mathcal{F} = \{1, \dots, n\}$ denotes the frozen set. We then focus on the update of $\{\lambda_i\}_{i \in \mathcal{I}}$. By applying mean-field approximation, we have 
	\begin{equation}
		q(\chi_k, \{\lambda_i\}_{i \in \mathcal{I}}) = q(\chi_k) \prod_{i \in \mathcal{I}} q(\lambda_i).
		\label{mean_field_assumption}
	\end{equation}
	According to Appendix \ref{CAVIalg}, the one-step $q(\lambda_i)$ update follows
	\begin{equation}
		q(\lambda_i) = \operatorname{IG}\left(\lambda_i \middle| \frac{\nu_i^*}{2}, \frac{\nu_i^*(\tau_i^{2})^{*}}{2}\right),
		\label{posterior_lambda}
	\end{equation}
	where
	\begin{equation}
		\begin{aligned}
			\nu_i^* &= \nu_i + 1, \\
			(\tau_i^{2})^{*} &= \frac{\mathbb{E}_{q(\chi_k)}[([t_k - W_k \chi_k]_i)^2]}{\nu_i+1} +  \frac{\nu_i \tau_i^2}{\nu_i+1},
		\end{aligned}
		\label{posterior_params}
	\end{equation}
   and
	\begin{equation}\notag
		\mathbb{E}_{q(\chi_k)}[([t_k - W_k \chi_k]_i)^2] = \left[t_k - W_k {\chi}_k^{+}\right]_i^2 + \left[W_k P_{\chi k}^{+} W_k^T\right]_{ii}.
	\end{equation}

	\begin{remark}
		By lifting the state space to explicitly incorporate the process noise, \eqref{posterior_params} structurally unifies the process and measurement models. This enables the simultaneous variational inference of both process and measurement noise covariances within a single, unified update framework.
	\end{remark}
	
	\begin{prop} (Scale Invariance).
		\label{scaleinv}
		For the system defined in \eqref{sys}, if the initial error covariance $P_0$ and the noise covariances $Q_k$ and $R_k$ are scaled by a constant $\alpha > 0$, then for all $k \ge 1$, the Kalman gain $K_k$ and state estimates $\hat{x}_{k|k-1}$ and $\hat{x}_{k|k}$ remain invariant. Meanwhile, the error covariances are scaled by $\alpha$, i.e., $P'_{k|k-1} = \alpha P_{k|k-1}$ and $P'_{k|k} = \alpha P_{k|k}$.
	\end{prop}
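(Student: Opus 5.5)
I will prove Proposition \ref{scaleinv} by induction on $k$, using the standard Kalman recursion for \eqref{sys}. The primed quantities denote the filter run with $P'_0=\alpha P_0$, $Q'_k=\alpha Q_k$, $R'_k=\alpha R_k$. The claim to be carried through the induction is
\[
P'_{k-1|k-1}=\alpha P_{k-1|k-1},\qquad \hat{x}'_{k-1|k-1}=\hat{x}_{k-1|k-1}.
\]
The base case $k=1$ is immediate. The filter is initialized with $\hat{x}_{0|0}=0$ (zero-mean $x_0$) and $P_{0|0}=P_0$, so $P'_{0|0}=\alpha P_0$ and the estimates coincide.

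For the inductive step I will pass through the recursion one equation at a time.

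\emph{Prediction.} We have $\hat{x}'_{k|k-1}=A\hat{x}'_{k-1|k-1}=\hat{x}_{k|k-1}$, and
\[
P'_{k|k-1}=A(\alpha P_{k-1|k-1})A^T+\alpha Q_k=\alpha P_{k|k-1}.
\]

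\emph{Gain.} The gain is a ratio in which $\alpha$ cancels:
\[
K'_k=\alpha P_{k|k-1}C^T\bigl(\alpha(CP_{k|k-1}C^T+R_k)\bigr)^{-1}=K_k .
\]
The inverse exists because $R_k\succ 0$ and $\alpha>0$.

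\emph{Update.} The posterior mean is
\[
\hat{x}'_{k|k}=\hat{x}'_{k|k-1}+K'_k\,(y_k-C\hat{x}'_{k|k-1})=\hat{x}_{k|k}.
\]
The measurements $y_k$ are the same data in both runs, so nothing scales there. For the covariance I will use the Joseph form, consistent with \eqref{errcov}:
\[
P'_{k|k}=(I-K_kC)\,\alpha P_{k|k-1}\,(I-K_kC)^T+K_k\,\alpha R_k\,K_k^T=\alpha P_{k|k}.
\]
The same conclusion holds with the short form $(I-K_kC)P_{k|k-1}$. This closes the induction, and all four claims follow.

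I expect no real obstacle; the computation is homogeneity of degree one of the covariance recursion and degree zero of the gain. The one point to state explicitly is that $\alpha$ must multiply $P_0$, $Q_k$ and $R_k$ simultaneously. If $R_k$ were not scaled, the gain would change and the invariance would fail. I would also note that the identical argument applies to the lifted update \eqref{update}. There, $\Sigma_{Pk}$, $\tilde{Q}_k$ and $\tilde{R}_k$ all scale by $\alpha$, so $\tilde{P}_{\chi k}^{-}$ scales by $\alpha$ and $\tilde{K}_{k,t}$ is invariant.
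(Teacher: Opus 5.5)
Your induction is correct and follows the paper's proof essentially line for line: prediction, cancellation of $\alpha$ in the gain, mean update, then covariance update. Using the Joseph form rather than $(I-K_kC)P_{k|k-1}$ makes no difference. One caution about your closing aside: the argument does not transfer verbatim to the lifted update \eqref{update}. There $\tilde{Q}_k=B\Lambda_w^{-1}B^T$ and $\tilde{R}_k=D\Lambda_v^{-1}D^T$ contain the IRLS weights $(\nu_i+1)/(\nu_i\tau_i^2+e_i^2)$, whose $\nu_i\tau_i^2$ and $e_i^2$ terms do not scale together. So $\tilde{Q}_k$ and $\tilde{R}_k$ scale by exactly $\alpha$ only in the Gaussian limit $\nu_i\to\infty$ or with the weights held fixed.
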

	The proof is available in Appendix \ref{pscale}. Proposition \ref{scaleinv} implies that simultaneously estimating all channels of $Q_k$ and $R_k$ (equivalently, $\{\lambda_i\}_{i \in \mathcal{I}}$) is ill-posed. Furthermore, in practical applications, It is common that only a subset of the process and measurement noise statistic characteristics change over time. To reflect this fact, we split \eqref{sys} as:
	\begin{equation}
		\begin{aligned}
			x_{k}&=A x_{k-1} + B_f w_{fk} + B_v w_{vk} \\
			y_k &= C x_{k} +D_f v_{fk} + D_v v_{vk}
			\label{syssplit}
		\end{aligned}
	\end{equation}
	where $B_f w_{fk}$ and $D_f v_{fk}$ denote the fixed components, while $B_v w_{vk}$ and $D_v v_{vk}$ represent the time-varying components to be inferred. We define an active set $\mathcal{A} \in \mathcal{I}$ which is a subset of $\mathcal{I}$ and contains the non-zero channels of $B_v w_{vk}$ and $D_v v_{vk}$. Consequently, one can perform inference on $\mathcal{A}$ to avoid ill-posedness.
	
	\begin{remark}
		Consider the partitioned system in \eqref{syssplit}. If the fixed noise components are positive-definite (i.e., $B_f Q_f B_f^T \succ 0$ or $D_f R_f D_f^T \succ 0$), the uniform scale invariance established in Proposition \ref{scaleinv} is strictly broken. 
	\end{remark}
	The update rule $\nu_i^* = \nu_i + 1$ in \eqref{posterior_params} implies that the degrees of freedom increase monotonically over time. As $\nu_i \to \infty$, the innovation weight $\frac{1}{\nu_i+1}$ vanishes, causing the covariance inference to saturate and lose sensitivity to new data. Meanwhile, as shown by Property \ref{pro2}, the increment of $\nu_i$ suppresses the robust mechanism by forcing the weight function $d_{\nu_i}(e_i)$ to converge to the constant $1/\tau_i^2$, degenerating the robust estimator into a standard  Kalman filter. To maintain persistent adaptivity, we introduce a forgetting factor $0 < \rho_i < 1$ to bound the growth of the degrees of freedom. The sequential update for $\nu_i$ is modified as:
	\begin{equation}
		\begin{aligned}
			\nu_{i,k|k-1} &= \rho_i \nu_{i,k-1|k-1} \\
			\nu_{i,k|k} &= \nu_{i,k|k-1} + 1.
		\end{aligned}
	\end{equation}
	The whole hierarchical variational Kalman filtering (HVKF) is summarized in Algorithm \ref{HVF}.
	\begin{algorithm}[h]
		\setstretch{0.8} 
		\caption{HVKF}
		\label{HVF}
		\small
		\begin{algorithmic}[1]
			\State \textbf{Step 1: Initialization} \\
			Set initial $\nu_{i,0|0}$, $\tau^2_{i,0|0}$, and forgetting factors $\rho_i \in (0, 1)$.
			\State \textbf{Step 2: Hyperparameter Prediction}
			\For{$i \in \mathcal{A}$}
			\State $\nu_{i,k|k-1} = \rho_i \nu_{i,k-1|k-1}$
			\State $\tau^2_{i,k|k-1} = \tau^2_{i,k-1|k-1}$ \Comment{Static model}
			\EndFor
			
			\State \textbf{Step 3: State Estimation (MMAP)}
			\State Input $\nu_{i,k|k-1}$ and $\tau^2_{i,k|k-1}$ into \textbf{Algorithm \ref{Alg1}}
			\State Execute \textbf{Algorithm \ref{Alg1}} to obtain ${\chi}_k^{+}$ and $P_{\chi k}^+$.
			\State \textbf{Step 4: Hyperparameter Posterior Update}
			\For{$i \in \mathcal{A}$} 
			\State $\nu_{i,k|k} = \nu_{i,k|k-1} + 1$
			\State $(\tau_i^2)_{k|k} =\frac{\nu_{i,k|k-1} \tau^2_{i,k|k-1}}{\nu_{i,k|k}} + \frac{\left[t_k - W_k {\chi}_k^{+}\right]_i^2 + \left[W_k P_{\chi k}^{+} W_k^T\right]_{ii}}{\nu_{i,k|k}}$
			\EndFor
			
			\State \textbf{Step 5: Recursion}
			\State Set $k \leftarrow k+1$ and return to Step 2.
		\end{algorithmic}
	\end{algorithm}
	\begin{remark}
	It is worth mentioning that $\nu_{i,k|k}$ controls how confident we are about our current estimate, and a smaller value indicates a larger uncertainty. In the steady state, one has $\nu_{i,k|k}=\frac{1}{1-\rho_i}$, demonstrating that the steady state $\nu_{i,k|k}$ indeed is controlled by the forgetting factor.
	\end{remark}
	\subsection{Covariance Tracking Analysis and Delay Compensation}
	The whitened and lifted modeling in \eqref{tkwk} allows us to analyze the covariance tracking speed channel by channel. At the steady state, one has $\rho_i = \frac{\nu_{i,k|k-1}}{\nu_{i,k|k-1}+1}$. Defining the realized variational error as $\mathcal{E}_{i,k} \triangleq [t_k - W_k {\chi}_k^{+}]_i^2 + [W_k P_{\chi k}^{+} W_k^T]_{ii}$, we assume it is a weakly stationary sequence with mean $\sigma_i^2$ and variance $\eta_i^4$. Subsequently, we rewrite the scale parameter update function as shown in Line 14 of Algorithm \ref{HVF} (at the steady state) as 
	\begin{equation}
		(\tau_{i,k|k})^2 = \rho_i (\tau_{i,k-1|k-1})^{2} + (1-\rho_i) \mathcal{E}_{i,k}.
		\label{scale_update}
	\end{equation} 
	\begin{theorem} \label{prop:refined_estimator}
		Defining the prior steady-state degrees of freedom $\nu_i^{-} \triangleq \nu_{i,k|k-1}$. The scale parameter update in \eqref{scale_update} forms an exponentially weighted moving average (EWMA) estimator, 
		which satisfies the following steady-state properties:
		\begin{enumerate}
			\item \emph{Asymptotic Unbiasedness:} $\lim_{k \to \infty} \mathbb{E}\left[ (\tau_i^2)_{k|k} \right] = \sigma_i^2$.
			\item \emph{Tracking Variance:} $\lim_{k \to \infty} \text{Var}\left( (\tau_i^2)_{k|k} \right) = \frac{1-\rho_i}{1+\rho_p}\eta_i^4= \frac{1}{2\nu_i^{-} + 1} \eta_i^4$.
			\item \emph{Tracking Time Constant:} $T_i =\frac{-1}{\ln(\rho_i)}$.
		\end{enumerate}
	\end{theorem}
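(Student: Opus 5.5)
The plan is to unroll the linear recursion \eqref{scale_update} into a convolution of the input sequence $\mathcal{E}_{i,k}$ with geometric weights, and then read off each property. Iterating \eqref{scale_update} from an initial value $(\tau_{i,0|0})^2$ gives
\begin{equation}\notag
(\tau_i^2)_{k|k} = \rho_i^{k} (\tau_{i,0|0})^2 + (1-\rho_i)\sum_{l=0}^{k-1} \rho_i^{l}\, \mathcal{E}_{i,k-l},
\end{equation}
which is exactly the EWMA form claimed. The steady-state relation $\rho_i = \nu_i^{-}/(\nu_i^{-}+1)$ stated just before the theorem lets us convert every expression in $\rho_i$ into one in $\nu_i^{-}$.

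For \emph{unbiasedness}, we take expectations and use weak stationarity, $\mathbb{E}[\mathcal{E}_{i,k}]=\sigma_i^2$. This gives $\mathbb{E}[(\tau_i^2)_{k|k}] = \rho_i^k(\tau_{i,0|0})^2 + (1-\rho_i^k)\sigma_i^2$, which tends to $\sigma_i^2$ because $0<\rho_i<1$.

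For the \emph{time constant}, the homogeneous part of the response decays as $\rho_i^k = e^{k\ln\rho_i}$. Matching this to $e^{-k/T_i}$ gives $T_i = -1/\ln\rho_i$. Equivalently, the step response $1-\rho_i^k$ reaches the fraction $1-e^{-1}$ of its final value at $k=T_i$.

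For the \emph{variance}, the main obstacle is that weak stationarity only fixes the mean and variance, not the autocorrelation of $\mathcal{E}_{i,k}$. The stated formula corresponds to treating the fluctuations as uncorrelated across time, so I will make this explicit: $\mathrm{Cov}(\mathcal{E}_{i,k},\mathcal{E}_{i,l}) = \eta_i^4\delta_{kl}$. This is justified heuristically by the whiteness of the innovations/residuals of an optimal filter in steady state, and it is the point I would flag as the genuine assumption. Under it, the deterministic initial term drops out and
\begin{equation}\notag
\mathrm{Var}\big((\tau_i^2)_{k|k}\big) = (1-\rho_i)^2\eta_i^4\sum_{l=0}^{k-1}\rho_i^{2l} \to \frac{(1-\rho_i)^2}{1-\rho_i^2}\eta_i^4 = \frac{1-\rho_i}{1+\rho_i}\eta_i^4 .
\end{equation}
Here the subscript $\rho_p$ in the statement should be read as $\rho_i$. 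Substituting $\rho_i=\nu_i^-/(\nu_i^-+1)$ gives $1-\rho_i = 1/(\nu_i^-+1)$ and $1+\rho_i=(2\nu_i^-+1)/(\nu_i^-+1)$. The ratio is therefore $1/(2\nu_i^-+1)$, completing the proof.
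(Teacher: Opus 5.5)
Your proof is correct and follows essentially the same route as the paper, which likewise assumes temporal independence of $\mathcal{E}_{i,k}$ (the assumption you rightly flag). The only difference is that the paper gets the variance as the fixed point of the one-step recursion $\operatorname{Var}[(\tau_{i,k|k})^2]=\rho_i^2\operatorname{Var}[(\tau_{i,k-1|k-1})^2]+(1-\rho_i)^2\eta_i^4$ instead of summing your unrolled geometric series, which is equivalent, and your reading of $\rho_p$ as $\rho_i$ and the substitution $\rho_i=\nu_i^{-}/(\nu_i^{-}+1)$ match the paper exactly.
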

	The proof is available in Appendix \ref{prefined_estimator}. It is obvious that there is a trade-off between the covariance tracking speed and tracking smoothness based on Theorem \ref{prop:refined_estimator}, i.e., a bigger forgetting factor $\rho_i$ gives a smaller tracking variance but a bigger tracking time constant. To address this, we provide two remedies: a higher-order filter and a sliding window filter, which will be detailed below.
	\subsubsection{High Order Filter} 
	Here we provide a general prior model. For the scale parameter update, we design  
	\begin{equation}
		\tau_{i,k|k-1}^2 = \sum_{m=1}^{N} \phi_m \tau_{i,k-m|k-m}^2,
		\label{momentump}
	\end{equation}
	with $\sum_{m=1}^{N} \phi_m = 1$. Substituting it into \eqref{scale_update} gives
	\begin{equation}
		\tau_{i,k|k}^2 = \rho_i \sum_{m=1}^{N} \phi_m \tau_{i,k-m|k-m}^2 + (1-\rho_i) \mathcal{E}_{i,k}.
		\label{scale_update1}
	\end{equation} 
	\begin{prop}
		\label{BIBOstable}
		The estimator \eqref{scale_update1} is BIBO stable if and only if all roots of its corresponding characteristic polynomial, $P(z) = z^N - \sum_{m=1}^{N} \rho_i \phi_m z^{N-m} = 0$, lie strictly inside the unit circle.
	\end{prop}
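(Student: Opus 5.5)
We treat \eqref{scale_update1} as a linear time-invariant difference equation driven by the input sequence $u_k \triangleq (1-\rho_i)\mathcal{E}_{i,k}$. Writing $s_k \triangleq \tau_{i,k|k}^2$, the recursion reads
\begin{equation}\notag
s_k - \rho_i \sum_{m=1}^{N}\phi_m s_{k-m} = (1-\rho_i)\mathcal{E}_{i,k}.
\end{equation}
Taking the (one-sided) $z$-transform with zero initial conditions gives the transfer function
\begin{equation}\notag
H(z) = \frac{(1-\rho_i)}{1-\rho_i\sum_{m=1}^{N}\phi_m z^{-m}} = \frac{(1-\rho_i) z^{N}}{P(z)},
\end{equation}
so the poles of the system are exactly the roots of $P(z)$. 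The numerator $(1-\rho_i)z^N$ has only roots at the origin, and $P(0) = -\rho_i\phi_N$. Whenever $\phi_N \neq 0$, $z=0$ is not a root of $P$, and no pole--zero cancellation can occur. If $\phi_N = 0$, we first reduce $N$ to the largest index with $\phi_m\neq 0$; zero roots of $P$ lie inside the unit circle anyway and are harmless. Hence $H$ is in lowest terms up to such trivial factors.

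Next, we invoke the standard characterization of BIBO stability for rational LTI systems: the system is BIBO stable iff its impulse response $h_k$ is absolutely summable. For the sufficiency direction, we write $h_k$ via partial fractions as a finite sum of terms $c\,k^{r} z_\ell^{k}$, where $z_\ell$ are the roots of $P$ and $r$ is bounded by the multiplicity minus one. If all $|z_\ell|<1$, each term is absolutely summable, so $\sum_k |h_k|<\infty$. Then any bounded input $|\mathcal{E}_{i,k}|\le M$ yields $|s_k|\le (1-\rho_i)M\sum_k|h_k|$; the zero-input response from the initial conditions also decays geometrically.

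For necessity, we argue that if some root satisfies $|z_\ell|\ge 1$, then because it is a genuine pole (no cancellation, and the residue is nonzero), $h_k$ contains a term $c\,k^r z_\ell^k$ with $c\neq 0$ that does not decay. Therefore $\sum_k |h_k| = \infty$. We then construct the bounded input $u_{k} = \operatorname{sgn}(h_{K-k})$ for $k\le K$, whose output at time $K$ equals $\sum_{k\le K}|h_k|$, which is unbounded in $K$. This contradicts BIBO stability.

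The main obstacle is the necessity direction, specifically ensuring that an outer root actually appears in $h_k$ with a nonzero coefficient, i.e., that there is no cancellation. This is where the structure of $H(z)$ (numerator a pure power of $z$) is used. We also note the degenerate case $\rho_i = 1$, where the input gain vanishes and the statement must be read with the convention $0<\rho_i<1$ from the forgetting-factor definition. As a side remark, since $\sum_m\phi_m=1$, $P(1)=1-\rho_i>0$, so $z=1$ is never a pole. In particular, if all $\phi_m\ge 0$, then $|\rho_i\sum\phi_m z^{-m}|\le\rho_i<1$ for $|z|\ge1$, which gives stability automatically; this serves as a useful sanity check but is not needed for the equivalence.
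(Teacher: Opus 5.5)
Your proposal is correct and follows the route the paper indicates but omits: take the $z$-transform to get $H(z)=(1-\rho_i)z^N/P(z)$ and apply the standard pole-location criterion for BIBO stability. You add useful detail the paper leaves out, namely that pole--zero cancellation can only occur at $z=0$, and that $0<\rho_i<1$ is needed (at $\rho_i=1$ the equivalence fails). One step is stated loosely: that a nonzero term $c\,k^r z_\ell^k$ with $|z_\ell|\ge 1$ forces $\sum_k|h_k|=\infty$, even though other terms of equal modulus could partially offset it. You can close this cleanly. If $h\in\ell^1$, then $\sum_k h_k z^{-k}$ is bounded by $\sum_k|h_k|$ on $|z|\ge 1$ and coincides there with $H(z)$, which is incompatible with a pole of $H$ on or outside the unit circle.
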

	The proof can be obtained by taking the Z-transform of \eqref{scale_update1} and applying standard linear system stability criteria~\cite{ogata1995discrete}. We omitted it here for brevity.  In the case $N=1$, one has $\phi_1=1$, which recovers the EWMA estimator. In the case $N=2$, we obtain a velocity compensation prior by assigning $\phi_1= 1+ \gamma$ and $\phi_2=-\gamma$ so that $$
	\tau_{i,k|k-1}^2 =  \tau_{i,k-1|k-1}^2  +   \gamma (\tau_{i,k-1|k-1}^2- \tau_{i,k-2|k-2}^2).
	$$ 
	This explicitly calculates the first-order difference (trend) of the variance. Based on Proposition \ref{BIBOstable}, one has $P(z) = z^2 - \rho_i(1+\gamma)z + \rho_i \gamma$. The system is strictly BIBO stable if the following condition holds:
	\begin{equation}
		0 \le \gamma < \frac{1}{\rho_i}.
	\end{equation}
	In the case $N=3$, we obtain an acceleration compensation prior. Denote $\Delta \tau_{i,k-1}^2 \triangleq \tau_{i,k-1|k-1}^2- \tau_{i,k-2|k-2}^2$. We formulate the prior as 
	\begin{equation}
		\tiny
		\tau_{i,k|k-1}^2 = \tau_{i,k-1|k-1}^2 + \gamma_1 (\Delta \tau_{i,k-1}^2) + \gamma_2 \left[ \Delta \tau^2_{i,k-1} - \Delta \tau^2_{i,k-2} \right].
	\end{equation}
	By expanding the differences, the corresponding filter weights are exactly synthesized as $\phi_1 = 1+\gamma_1+\gamma_2$, $\phi_2 = -(\gamma_1+2\gamma_2)$, and $\phi_3 = \gamma_2$. This 3rd-order design dramatically reduces dynamic tracking lag during parabolic or exponentially growing profiles by predicting the rate-of-change of the variance trend itself. However, the stability margin becomes tighter. The roots of $P(z) = z^3 - \rho_i\phi_1 z^2 - \rho_i\phi_2 z - \rho_i\phi_3$ must be verified via the 3rd-order Jury test, requiring $\gamma_2 < \gamma_1 < 1/\rho_i$ to prevent underdamped oscillatory divergence.
	
	We visualize the trade-off between tracking speed and steady-state variance for the $N=1$ and $N=2$ estimators in Fig. \ref{Pareto}. For the $N=2$ case, setting $\gamma = \frac{\rho^2}{1+\rho}$ yields two real poles: $z_1 = \rho^2$ and $z_2 = \frac{\rho}{1+\rho}$. The $N=2$ time constant is dictated by the dominant pole ($z_1 = \rho^2$ as $\rho \ge 0.618$), while the steady-state variance is derived based on Theorem \ref{prop:refined_estimator}. The results confirm that the $N=2$ scenario possesses a better Pareto profile compared with $N=1$. The detailed high-order HVKF (HOHVKF) is summarized in Algorithm \ref{HOHVKF}.
	\begin{figure}[!htp]
		\centering
		\includegraphics[width=0.98\columnwidth]{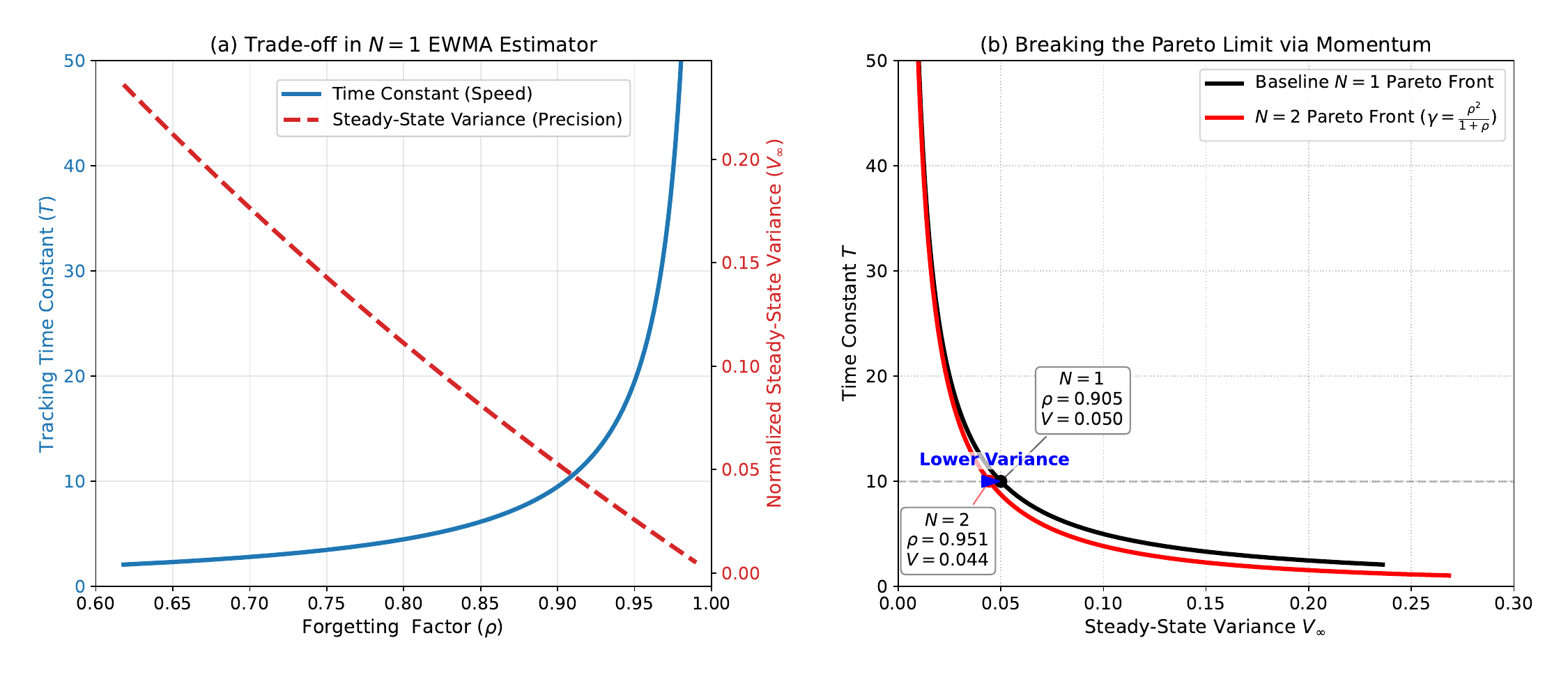}
		\caption{Performance analysis of the $N=1$ and $N=2$ estimators. (a) The inherent speed-precision trade-off in the standard $N=1$ EWMA filter, plotting the time constant (solid blue line) and normalized steady-state variance (dashed red line) against $\rho$. (b) Pareto front comparison of tracking speed versus precision. By applying the optimal overdamped momentum gain $\gamma = \frac{\rho^2}{1+\rho}$, the $N=2$ trajectory (solid red line) successfully breaks the absolute Pareto limit of the $N=1$ baseline (solid black line), achieving strictly lower variance at equivalent tracking speeds.}
		\label{Pareto}
	\end{figure}
	
	\begin{algorithm}[h]
		\setstretch{0.8} 
		\caption{HOHVKF}
		\label{HOHVKF}
		\small
		\begin{algorithmic}[1]
			\State \textbf{Step 1: Initialization} \\
			Set initial $\nu_{i,0|0}$, $\tau^2_{i,0|0}$, $\rho_i \in (0, 1)$ for $i \in \mathcal{A}$. Additionally, set the filter parameter $\{\phi_m\}_{m=1:N}$.
			
			\State \textbf{Step 2: Hyper-parameter Prediction}
			\For{$i \in \mathcal{A}$}
			\State $\nu_{i,k|k-1} = \rho_i \nu_{i,k-1|k-1}$
			\State $\tau^2_{i,k|k-1} = \sum_{m=1}^{N} \phi_m \tau_{i,k-m|k-m}^2$ \Comment{High-order model}
			\EndFor
			
			\State \textbf{Execute Step 3, 4, and 5 in Algorithm \ref{HVF}}
		\end{algorithmic}
	\end{algorithm}
	
	\subsubsection{Sliding Window Filter}
	By investigating \eqref{scale_update}, we find that $\mathcal{E}_{i,k}$ is influenced by the estimate of $\chi_k$, which is coupled with the preceding scale parameter $\tau_{i,k-1|k-1}^2$. This temporal dependency indicates that a refined historical estimate of $\tau_{i,k-1|k-1}^2$ contributes to a tighter current scale estimate. Motivated by this, we formulate a sliding-window scale parameter estimator. Let $L$ denote the length of the sliding window. At time step $k$, we define the sequence of the localized scale parameters and the corresponding realized variational errors over the window as vectors:
	\begin{equation}\notag
		\begin{aligned}
		\boldsymbol{\mathcal{T}}_{i,k} &\triangleq \begin{bmatrix} \tau_{i,k-L+1|k-L+1}^2, & \dots, & \tau_{i,k|k}^2 \end{bmatrix}^T, \\
		\boldsymbol{\mathcal{E}}_{i,k} &\triangleq \begin{bmatrix} \mathcal{E}_{i,k-L+1}, & \dots, & \mathcal{E}_{i,k} \end{bmatrix}^T.
		\end{aligned}
	\end{equation}
	To obtain an optimal and smooth estimation of the sequence $\boldsymbol{\mathcal{T}}_{i,k}$ given the measurement sequence $\boldsymbol{\mathcal{E}}_{i,k}$, we cast the scale parameter update as a regularized localized optimization problem:
	\begin{equation} 
		\tiny
		\label{eq:sliding_window_opt}
		\hat{\boldsymbol{\mathcal{T}}}_{i,k} = \arg\min_{\boldsymbol{\mathcal{T}}_{i,k}} \left\{ \sum_{j=k-L+1}^{k} \frac{1}{2}\mathcal{D} \left( \mathcal{E}_{i,j}, \tau_{i,j|j}^2 \right) + \frac{\lambda}{2} \mathcal{R}(\boldsymbol{\mathcal{T}}_{i,k}) \right\}.
	\end{equation}
	Here, $\mathcal{D}(\cdot, \cdot)$ is a data fidelity penalty (typically the squared Euclidean distance $\left\| \mathcal{E}_{i,j} - \tau_{i,j|j}^2 \right\|_2^2$), and $\lambda > 0$ is the regularization parameter controlling the trade-off between data fidelity and tracking smoothness. The penalty function $\mathcal{R}(\boldsymbol{\mathcal{T}}_{i,k})$ explicitly encodes our prior belief regarding the dynamics of the scale parameter. Depending on the physical nature of the noise characteristics, three distinct smoothing strategies can be adopted as follows.
	\begin{itemize}
		\item \textbf{$\ell_2$-Regularization:} 
		\begin{equation}\notag
			\mathcal{R}_{\ell_2}(\boldsymbol{\mathcal{T}}_{i,k}) = \sum_{j=k-L+1}^{k} \left( \tau_{i,j|j}^2 - \tau_{i,j-1|j-1}^2 \right)^2.
		\end{equation}
		\item \textbf{$\ell_1$-Regularization:}
		\begin{equation}\notag
			\mathcal{R}_{\ell_1}(\boldsymbol{\mathcal{T}}_{i,k}) = \sum_{j=k-L+1}^{k} \left| \tau_{i,j|j}^2 - \tau_{i,j-1|j-1}^2 \right|.
		\end{equation}
		\item \textbf{$\ell_0$-Regularization:} 
		\begin{equation}\notag
			\mathcal{R}_{\ell_0}(\boldsymbol{\mathcal{T}}_{i,k}) = \sum_{j=k-L+1}^{k} \left\| \tau_{i,j|j}^2 - \tau_{i,j-1|j-1}^2 \right\|_0.
		\end{equation}
	\end{itemize}
	\begin{theorem}
		\label{prop:sliding_smoothing_equivalence}
		Let $\hat{\boldsymbol{\mathcal{T}}}_{i,k} \triangleq [(\tau_{i,k-L+1|k})^2, \dots, (\tau_{i,k|k})^2]^T$ be the global minimizer of the localized $\ell_2$-regularized objective:
		\begin{equation} \label{eq:sliding_l2_objective}
			\scalebox{0.8}{$
				\mathcal{J}(\boldsymbol{\mathcal{T}}_{i,k}) = \frac{1}{2}\sum_{j=k-L+1}^{k}\left( \tau_{i,j}^2 - \mathcal{E}_{i,j} \right)^2 + \frac{\lambda}{2} \sum_{j=k-L+1}^{k} \left( \tau_{i,j}^2 - \tau_{i,j-1}^2 \right)^2.$}
		\end{equation}
		If the regularization parameter is chosen as $\lambda = \frac{\rho_i}{(1-\rho_i)^2}$, the optimal smoothed estimates $(\tau_{i,j|k})^2$ can be exactly computed via a recursive two-pass algorithm (at the steady state sense):
		\begin{enumerate}
			\item \emph{Forward pass:} 
			\begin{equation}
				(\tau_{i,j|j})^2 = \rho_i (\tau_{i,j-1|j-1})^2 + (1-\rho_i) \mathcal{E}_{i,j}.
				\label{forward}
			\end{equation}
			\item \emph{Backward pass:} 
			\begin{equation}
				(\tau_{i,j|k})^2 = (1-\rho_i) (\tau_{i,j|j})^2 + \rho_i (\tau_{i,j+1|k})^2.
				\label{backward}
			\end{equation}
		\end{enumerate}
	\end{theorem}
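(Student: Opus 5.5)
The approach is to read the $\ell_2$-regularized objective \eqref{eq:sliding_l2_objective} as the negative log-posterior of a scalar linear-Gaussian state-space model, and then to invoke the classical equivalence between MAP estimation for such models and the Kalman filter followed by the Rauch--Tung--Striebel (RTS) smoother. Concretely, we introduce the random-walk model
\begin{equation}\notag
	\tau_{i,j}^2 = \tau_{i,j-1}^2 + \omega_j,\quad \omega_j\sim\mathcal{N}(0,q),\qquad \mathcal{E}_{i,j} = \tau_{i,j}^2 + \epsilon_j,\quad \epsilon_j\sim\mathcal{N}(0,r).
\end{equation}
Its negative log joint density, up to constants and a factor $1/r$, is $\frac12\sum_j(\tau_{i,j}^2-\mathcal{E}_{i,j})^2+\frac{r}{2q}\sum_j(\tau_{i,j}^2-\tau_{i,j-1}^2)^2$. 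This coincides with \eqref{eq:sliding_l2_objective} when $\lambda=r/q$. The objective is strictly convex and quadratic, since the Hessian is $I+\lambda L$ with $L$ a path-graph Laplacian, which is positive definite. Its unique global minimizer therefore equals the posterior mean given $\mathcal{E}_{i,k-L+1:k}$, and that posterior mean is exactly what the forward Kalman filter plus backward RTS pass computes. The boundary term involving $\tau_{i,k-L|k-L}^2$ is handled by treating it as a prior at the window start; the claimed result holds only in the steady-state sense, so we will assume the window is long enough, or the prior is the steady-state one.

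The first step is the forward pass. For the scalar random walk, the steady-state Kalman recursion is
\begin{equation}\notag
	\tau_{j|j}^2=(1-K)\tau_{j-1|j-1}^2+K\mathcal{E}_{i,j},\qquad P^-=P+q,\quad K=P^-/(P^-+r),\quad P=(1-K)P^-.
\end{equation}
Setting $\rho_i=1-K$ reproduces \eqref{forward}. We then need to relate $\lambda=r/q$ to $\rho_i$. From $P=\rho_i P^-$ and $K=P^-/(P^-+r)$ we get $P^- = Kr/(1-K)=(1-\rho_i)r/\rho_i$ and $P=(1-\rho_i)r$. Substituting into $q=P^--P$ gives $q=(1-\rho_i)r(1/\rho_i-1)=(1-\rho_i)^2r/\rho_i$. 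Hence $\lambda=r/q=\rho_i/(1-\rho_i)^2$, which is the stated choice. Equivalently, we can start from the algebraic Riccati equation $P^{-2}-qP^- - qr=0$ and solve it.

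The second step is the backward pass. The RTS smoother with $A=1$ has gain $G=P_{j|j}/P_{j+1|j}=P/P^-=\rho_i$ in steady state. It gives
\begin{equation}\notag
	\tau_{j|k}^2=\tau_{j|j}^2+\rho_i\bigl(\tau_{j+1|k}^2-\tau_{j+1|j}^2\bigr),
\end{equation}
and since the random-walk prediction is $\tau_{j+1|j}^2=\tau_{j|j}^2$, this becomes $(1-\rho_i)\tau_{j|j}^2+\rho_i\tau_{j+1|k}^2$, which is \eqref{backward}.

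As an independent check, we can verify directly that the composed forward/backward recursions satisfy the normal equations $\tau_j^2-\mathcal{E}_{i,j}+\lambda(2\tau_j^2-\tau_{j-1}^2-\tau_{j+1}^2)=0$ at interior indices. The composition is the symmetric zero-phase filter with transfer function $(1-\rho_i)^2/((1-\rho_i z^{-1})(1-\rho_i z))$. Its inverse is $\bigl(1+\rho_i^2-\rho_i(z+z^{-1})\bigr)/(1-\rho_i)^2=1+\lambda(2-z-z^{-1})$, which is exactly the normal-equation operator. This matches and confirms the value of $\lambda$.

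The main obstacle is the boundary and non-steady-state issue. Exact equivalence with constant gains holds only when the filter covariance has reached its Riccati fixed point. At the window ends the optimal gains are time-varying, namely the Thomas-algorithm / LDL$^T$ factors of the tridiagonal Hessian. We will make the steady-state qualifier precise by noting that the tridiagonal $LDL^T$ pivots converge geometrically, at rate $\rho_i^2$, to their limit. The constant-gain recursions therefore reproduce the exact minimizer up to an error that decays away from the window start, and exactly in the limit $L\to\infty$, consistent with the phrase ``at the steady state sense'' in the statement.
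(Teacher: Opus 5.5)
Your argument is correct but takes a different route from the paper. The paper uses no state-space model. It writes the interior normal equations and multiplies by $(1-\rho_i)^2$ to get $(1+\rho_i^2)(\tau_{i,j|k})^2-\rho_i(\tau_{i,j-1|k})^2-\rho_i(\tau_{i,j+1|k})^2=(1-\rho_i)^2\mathcal{E}_{i,j}$. It then checks these directly by substituting $(1-\rho_i)(\tau_{i,j|j})^2=(\tau_{i,j|k})^2-\rho_i(\tau_{i,j+1|k})^2$, at $j$ and $j-1$, into the forward recursion, and closes with strict convexity. That is exactly your ``independent check'' done in the time domain, i.e.\ the factorization $1+\lambda(2-z-z^{-1})=(1-\rho_i z^{-1})(1-\rho_i z)/(1-\rho_i)^2$.

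Your main argument instead obtains both passes from the equivalence between MAP estimation and the Kalman filter plus RTS smoother for a random walk. At the Riccati fixed point this gives $\lambda=r/q$, $K=1-\rho_i$ and RTS gain $P/P^-=\rho_i$. The paper's route is shorter and fully elementary. Yours explains where $\lambda=\rho_i/(1-\rho_i)^2$ comes from and makes the ``steady state'' qualifier precise, which the paper leaves implicit.

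The paper checks only the interior equations before appealing to convexity. The terminal equation $(1+\lambda)(\tau_{i,k|k})^2-\lambda(\tau_{i,k-1|k})^2=\mathcal{E}_{i,k}$ does follow from the two recursions. The equation at $j=k-L+1$, where $\tau_{i,k-L}^2$ is held fixed, generally does not. Your interpretation handles this correctly. Using $P=(1-\rho_i)r$, the constant-gain passes exactly minimize the objective in which $\tau_{i,k-L}^2$ is freed and penalized by the steady-state prior term $\frac{1}{2(1-\rho_i)}\bigl(\tau_{i,k-L}^2-(\tau_{i,k-L|k-L})^2\bigr)^2$. For the literal objective, the discrepancy decays away from the window start.

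Two small refinements. First, in the filter/smoother parametrization only the window start carries a transient: starting the backward pass from the filtered value already satisfies the one-sided stationarity condition at $j=k$. Second, although the gains converge at rate $\rho_i^2$, the induced error in the estimates decays only like $\rho_i^{\,j-(k-L)}$.
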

	The proof is available in Appendix \ref{psliding}. The whole algorithm, termed as Sliding Window Hierarchical Variational Kalman Filtering (SWHVKF), is summarized in  Algorithm \ref{Alg:sliding_window}.
	\begin{algorithm}[h]
		\setstretch{0.8} 
		\caption{SWHVKF with $\ell_2$ Regularization.}
		\label{Alg:sliding_window}
		\small
		\begin{algorithmic}[1]
			\State \textbf{Step 1: Sliding Window Initialization} \\
			Establish a sliding window of length $L$ by running MMAP in Algorithm \ref{Alg1}. 
			\State \textbf{Step 2: Forward Pass}
			\For{$i \in \mathcal{A}$}
			\For{$j = k-L+1$ \textbf{to} $k$}
			\State $(\tau_{i,j|j})^2 = \rho_i (\tau_{i,j-1|j-1})^2 + (1-\rho_i) \mathcal{E}_{i,j}$
			\EndFor
			\EndFor
			\State \textbf{Step 3: Backward Pass}
			\For{$i \in \mathcal{A}$}
			\For{$j = k-1$ \textbf{down to} $k-L+1$}
			\State $(\tau_{i,j|k})^2 = (1-\rho_i) (\tau_{i,j|j})^2 + \rho_i (\tau_{i,j+1|k})^2$
			\EndFor
			\EndFor
			
			\State \textbf{Step 4: Standard Kalman Filter Rerun}
			\State Initialize the state at $j = k-L$ and reconstruct noise covariance sequence $\boldsymbol{\mathcal{T}}_{i,k} = \{(\tau_{i,j|k})^2\}_{i \in \mathcal{A}, j=k-L+1:k}$.
			\For{$j = k-L+1$ \textbf{to} $k$}
			\State Execute Kalman filter  to obtain ${\chi}_{j|j}^{+}$ and $P_{\chi_j|j}^{+}$.
			\EndFor
			\State \textbf{Step 5: Sliding Recursion}
			\State Set $k \leftarrow k+1$, shift the sliding window, and return to Step 1.
		\end{algorithmic}
	\end{algorithm}
	\begin{remark}
		Unlike the high-order estimator in \eqref{scale_update1}, which relies on a momentum prior for causal prediction to mitigate tracking lag, SWHVKF refines the estimation by exploiting a sliding window of historical information through a forward-backward manner. In the extreme offline case where the sliding window encompasses the entire data trajectory, this two-pass operation mathematically converges to a globally optimal zero-phase smoother, completely eliminating the time-delay (phase-shift) distortion inherent to one-sided exponential tracking.
	\end{remark}

	\section{Simulation}
	This section presents numerical examples to demonstrate the effectiveness of the proposed methods.
	\subsection{Example 1: Fast Measurement Covariance Tracking}
	We formulate the target tracking problem using the following discrete-time linear state-space model:
	$$\begin{aligned}
		x_{k+1} &= A x_k + B w_k \\
		y_k &= C x_k + v_k
	\end{aligned}$$
	where the system matrices are defined as $A = \begin{bmatrix} 1 & \Delta t \\ 0 & 1 \end{bmatrix}$, $B = \begin{bmatrix} 0.5(\Delta t)^2 \\ \Delta t \end{bmatrix}$, and $C = \begin{bmatrix} 1 & 0 \end{bmatrix}$, with the sampling interval $\Delta t = 0.01$. The process noise is assumed to be $w_k \sim \mathcal{N}(0, 1)$. To simulate abrupt changes in measurement quality, we introduce a piecewise-constant measurement noise profile $v_k \sim \mathcal{N}(0, R_k)$, where 
	$$R_k = \begin{cases}
		1, & \text{if } k \in \mathbb{S}_1 \\
		100, & \text{if } k \in \mathbb{S}_2.
	\end{cases}
	$$
	Here, $\mathbb{S} = \mathbb{S}_1 \cup \mathbb{S}_2$ represents the entire sequence of time steps. In the simulations, the root mean square error (RMSE) is calculated as $\frac{1}{nT}\sum_{k=1}^{T}\sum_{i=1}^{n}e_{i,k}$, where $T$ is the trajectory length and $N$ is the state dimension.

	During the simulation, we define $\rho=[1,1,\rho_3]$ with $\rho_3=0.99$. Note that $\rho_i=1$ indicates that variational inference is not applied to that specific channel. For the HOHVKF, we set $\gamma= 0.4925$, which yields the following scale update equations:
	\begin{equation}
		\begin{aligned}
			\tau_{i,k|k-1}^2 &=  \tau_{i,k-1|k-1}^2  +   0.4925 (\tau_{i,k-1|k-1}^2- \tau_{i,k-2|k-2}^2)\\
			(\tau_{i,k|k})^2 &= \rho_i (\tau_{i,k|k-1})^{2} + (1-\rho_i) \mathcal{E}_{i,k}.
		\end{aligned}
	\end{equation}	
	For the SWHVKF, we set the window length as $L=5$. We use the same nominal parameters across all evaluated filters: KF, VBKF~\cite{a4}, MMAP, HVKF, HOHVKF, and the Ideal KF (which utilizes the ground truth covariance). The comparative results are visualized in Fig. \ref{HOHVKF_SW}, and the measurement covariance tracking trajectory is visualized in Fig. \ref{GammaHO}. 
	\begin{figure}[!htp]
		\centering
		\includegraphics[width=0.99\columnwidth]{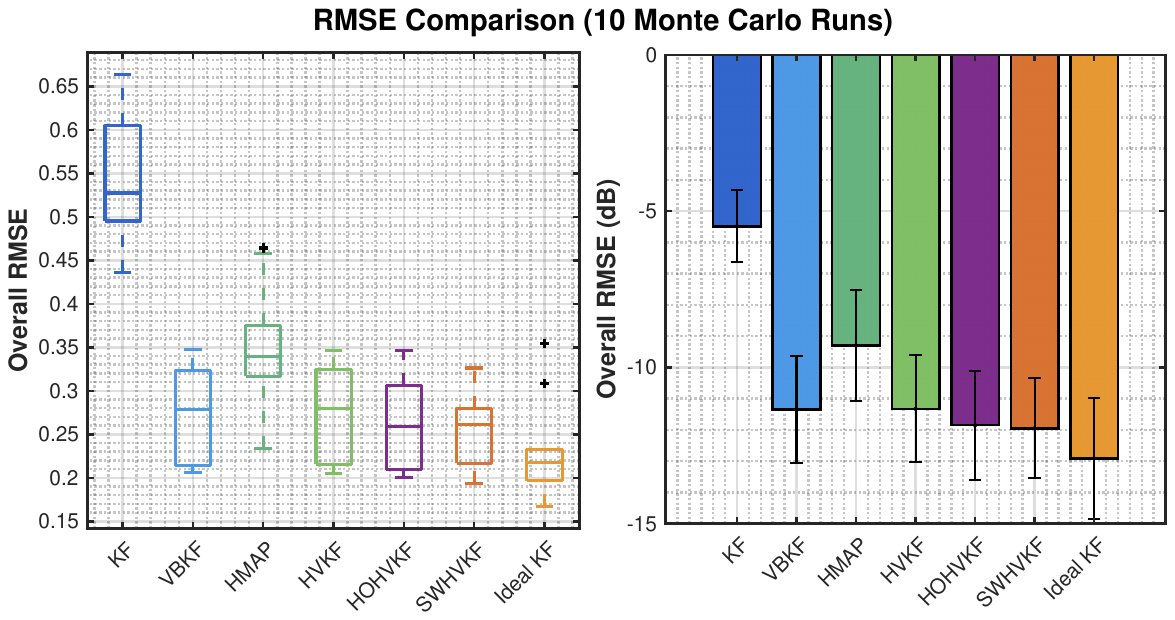}
		\caption{Overall RMSE comparison over 10 Monte Carlo runs. The left and right panels show the error distribution and mean values (dB), respectively. Compared to the standard KF, all variants achieve significant accuracy gains. Notably, both HOHVKF and SWHVKF perform better than the baseline HVKF, achieving $51.47\%$ and $52.15\%$ improvement compared with the standard KF.}
		\label{HOHVKF_SW}
	\end{figure}
	
	As shown in Fig. \ref{GammaHO}, the HOHVKF responds more than twice as fast as the HVKF, at the minor expense of slightly increased steady-state variance. This trade-off confirms the superiority of the proposed higher-order method. The detailed quantitative metrics are summarized in Table \ref{tab:overall_rmse_comparison}. The results indicate that both the proposed HOHVKF and SWHVKF perform better than the baseline HVKF, achieving accuracy levels closely approximating the ideal KF.
	
	\begin{figure}[!htp]
		\centering
		\includegraphics[width=0.75\columnwidth]{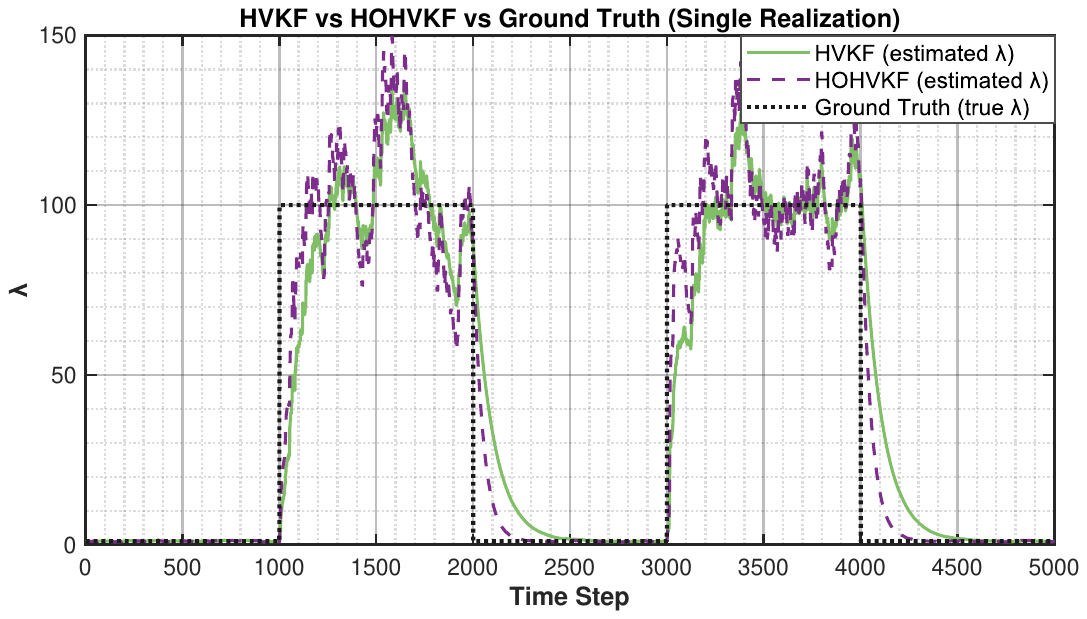}
		\caption{Tracking performance of parameter $\lambda$ in a single realization. HOHVKF responds faster to abrupt step changes in the ground truth compared to HVKF, but at the cost of slightly larger steady state variance.}
		\label{GammaHO}
	\end{figure}
	\begin{table}[ht]
		\centering
		\caption{RMSEs Comparison of Estimators.}
		\label{tab:overall_rmse_comparison}
		\scalebox{0.80}{
			\begin{tabular}{lccc}
				\hline
				\hline
				{Estimator} & { RMSE} & { RMSE (dB)} & {Improvement (\%)} \\
				\hline
				KF       & $0.536 \pm 0.072$ & $-5.48 \pm 1.16$  & -- \\
				VBKF     & $0.276 \pm 0.053$ & $-11.34 \pm 1.72$ & $+48.60$ \\
				MMAP     & $0.349 \pm 0.071$ & $-9.30 \pm 1.79$  & $+34.85$ \\
				{HVKF}   & {0.276 $\pm$ 0.053} & {-11.33 $\pm$ 1.71} & {+48.51} \\
				{HOHVKF} & {0.260 $\pm$ 0.052} & {-11.85 $\pm$ 1.74} & {+51.47} \\
				{SWHVKF} & {0.257 $\pm$ 0.047} & {-11.95 $\pm$ 1.60} & {+52.15} \\
				Ideal KF & $0.232 \pm 0.057$ & $-12.91 \pm 1.95$ & $+56.80$ \\
				\hline
				\hline
		\end{tabular}}
	\end{table}
	
	We further evaluate the performance of the proposed method over the entire measurement sequence under $\ell_2$, $\ell_1$, and $\ell_0$ regularizations (solved via ADMM with hard thresholding). The results, depicted on the left side of Fig. \ref{REGHVKF}, verify the equivalence between the $\ell_2$ regularization optimization and the two-pass filter. Furthermore, we observe that $\ell_1$ regularization achieves superior performance compared to $\ell_2$ regularization, albeit at the cost of increased computational complexity. 
	\begin{figure}[!h]
		\centering
		\includegraphics[width=0.99\columnwidth]{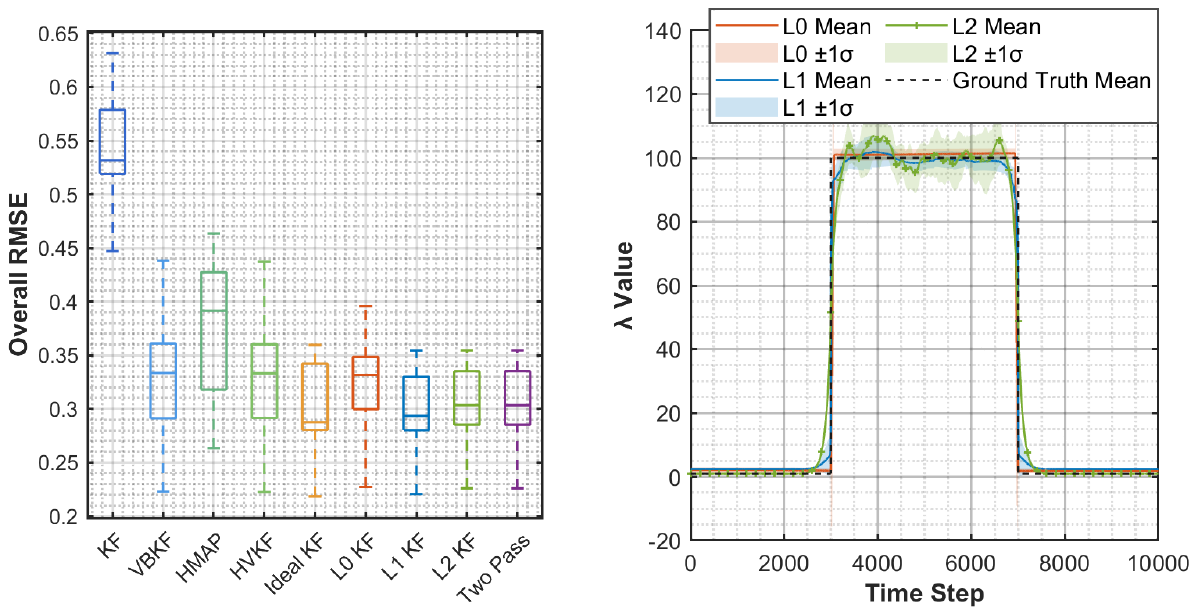}
		\caption{Performance comparison of different estimators. The left figure confirms the equivalent of $\ell_2$ regulation and two pass filter. The right figure show that the covariance estimator becomes a zero phase filter when containing the entire measurement sequence.}
		\label{REGHVKF}
	\end{figure}
	\subsection{Example 2: Rank Deficit Process Covariance Tracking}
	We consider a torsion load system \cite{luan2025kalman}. The system dynamics are governed by the following equations:
	\begin{equation}
		\begin{aligned}
			{x}_{k}&={A}_k {x}_{k-1}+{G}_{1,k} {u}_{k-1}+{G}_{2,k} {d}_{k-1}+{w}_{k}\\
			{y}_k&={C}_k {x}_{k}+{v}_{k}
			\label{linearfun}
		\end{aligned}
	\end{equation}
	with
	\begin{equation}\nonumber
		\tiny
		\begin{aligned}
			A_k&=\begin{bmatrix}
				0.9205 &   0.0795 &   0.0085 &   0.0003\\
				0.2045 &   0.7955 &   0.0007 &   0.0085\\
				-14.3468 &  14.3468 &   0.6872 &   0.0746\\
				37.5370 & -37.5370 &   0.1863 &   0.6405
			\end{bmatrix}\\
			G_{1,k}&=\begin{bmatrix}
				0.0826,
				0.0031,
				15.5568,
				1.2100
			\end{bmatrix}^{T}\\
			G_{2,k}&=\begin{bmatrix}
				0.0031,
				0.2076,
				1.2100,
				38.7470
			\end{bmatrix}^{T} \\
			C_k&=\begin{bmatrix}
				1&0&0&0\\
				0&1&0&0 \\
				0&0&1&0\\
				0&0&0&1
			\end{bmatrix}
		\end{aligned}
	\end{equation}
	where $x_k=[\theta_m,\theta_t,v_m,v_t]^{T} \in \mathbb{R}^{4}$ denotes the state vector, $u_k \in \mathbb{R}$ is the motor output torque, $d_k$ is the external disturbance, and $y_k$ is the measurement. 
	
	In the simulation, we set $w_k \sim \mathcal{N}(0, 0.01\mathrm{I}_4)$ and $v_k \sim \mathcal{N}(0, \mathrm{I}_4)$. Additionally, we model $d_k$ as Gaussian noise with $d_k \sim \mathcal{N}(0, 0.1\lambda_k)$, where $\lambda_k$ follows
	\begin{equation}
		\label{lambdak}
		\lambda_k = \begin{cases}
			1, & \text{if } k \in \mathbb{S}_1 \\
			100, & \text{if } k \in \mathbb{S}_2
		\end{cases}
	\end{equation}
	with $\mathbb{S}_1 = [0, 100) \cup [400, 600) \cup [900, 1000]$ and $\mathbb{S}_2 = [100, 400) \cup [600, 900)$. The time interval is $\Delta t=0.01$ s. We use the same nominal covariances for all estimators, except for the ground-truth covariance in Ideal KF. Meanwhile, we set $\nu=10$ for all process channels in STKF~\cite{a24}, apply a velocity-prior model with $\gamma=0.5$ to HOHVKF, and use a sliding window length of 5 for SWHVKF. Additionally, we employ the same $\rho=0.97$ for VBKF, HVKF, HOHVKF, and SWHVKF (specifically regarding all process channels for VBKF, and the $G_2 d_k$ channel in its lifted form). The overall accuracy of the different estimators is illustrated in Fig. \ref{QCompare}, with the covariance tracking visualization depicted in Fig. \ref{Qlambda} and the corresponding RMSE values detailed in Table \ref{tab:rmse_comparison}. The results demonstrate that the exact rank deficit process covariance estimation $\mathbb{E} [(G_{2,k}d_{k-1})(G_{2,k}d_{k-1})^{T}]$ can be tracked by the proposed methods (but is failed by the other estimators). Notably, over 100 Monte Carlo runs, the proposed HOHVKF and SWHVKF achieve performance almost identical to that of the Ideal KF (overall RMSE of $0.701 \pm 0.011$ for both HOHVKF and SWHVKF, compared to $0.070 \pm 0.011$ for the Ideal KF).
	
	\begin{figure}[!htp]
		\centering
		\includegraphics[width=0.9\columnwidth]{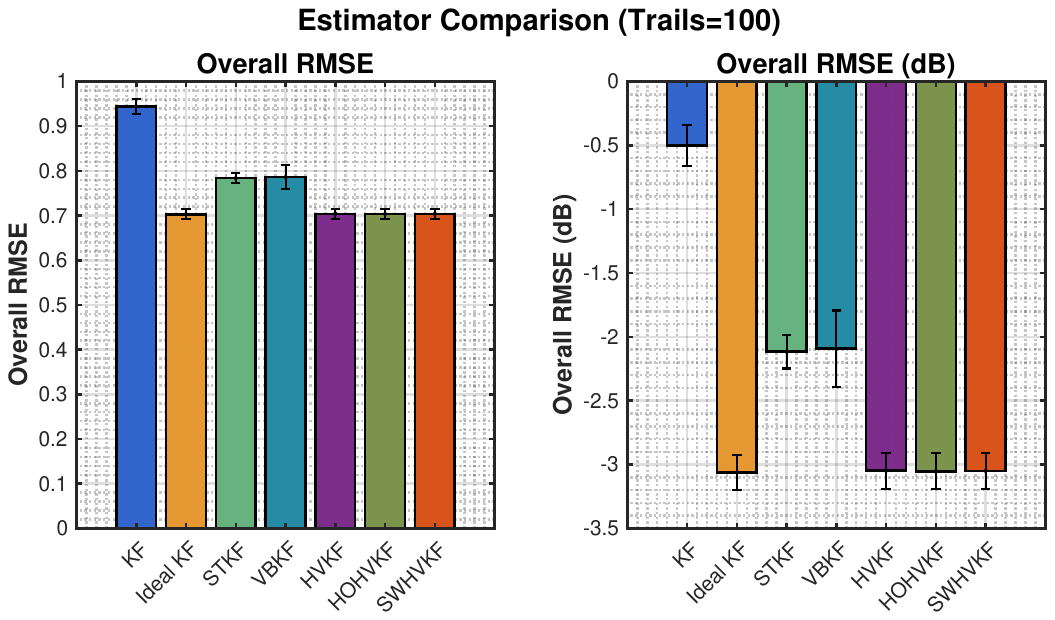}
		\caption{Comparison of different filtering algorithms over 100 Monte Carlo trials. The left panel displays the overall RMSE, while the right panel presents the overall RMSE in decibels (dB).}
		\label{QCompare}
	\end{figure}
	\begin{figure}[!htp]
		\centering
		\includegraphics[width=0.7\columnwidth]{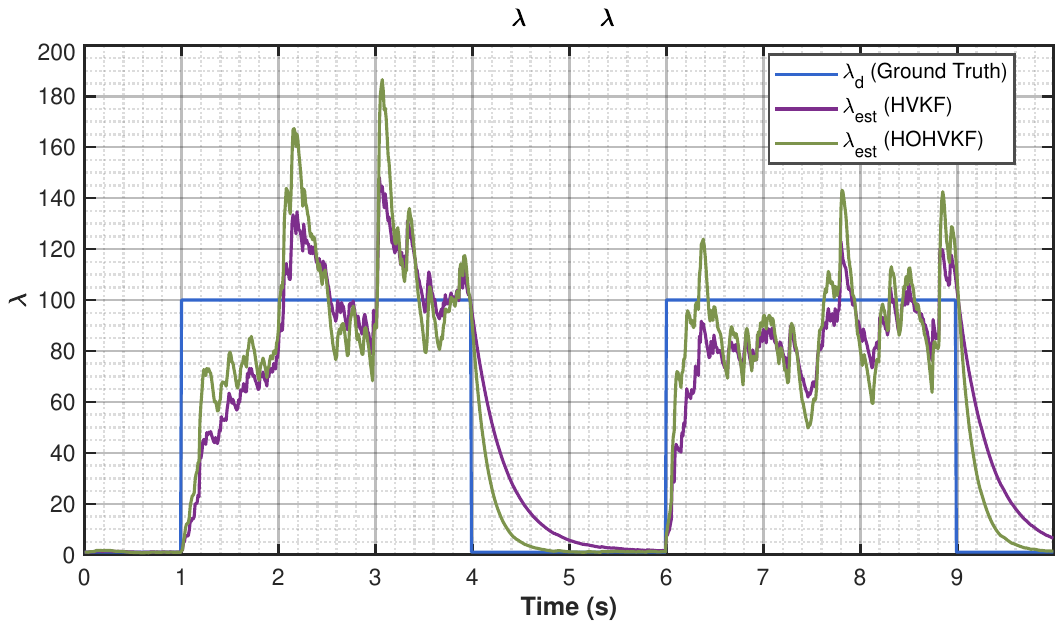}
		\caption{Tracking comparison for $\lambda$ (see \eqref{lambdak}) between the HVKF and HOHVKF algorithms. The higher-order approach (HOHVKF) yields a faster tracking response to sudden changes, at the cost of slightly increased fluctuation during the constant intervals.}
		\label{Qlambda}
	\end{figure}
	\begin{table*}[t]
		\centering
		\caption{Performance comparison of different estimators over 100 Monte Carlo trials (Values: mean $\pm$ std).}
		\scalebox{0.75}{
			\begin{tabular}{c|cccccc}
				\hline
				\hline
				\multirow{2}{*}{Estimators} & \multicolumn{4}{c}{RMSE of Individual States} & \multirow{2}{*}{Overall RMSE} & \multirow{2}{*}{ RMSE (dB)} \\
				\cline{2-5}
				& $x_1$ & $x_2$ & $x_3$ & $x_4$ & & \\
				\hline
				KF          &0.236 ± 0.015 & 0.236 ± 0.014 & 1.450 ± 0.039 & 1.146 ± 0.027 & 0.939 ± 0.020 &  -0.545 \\
				STKF        & 0.305 ± 0.023 & 0.425 ± 0.022 & 1.078 ± 0.025 & 1.001 ± 0.021 & 0.781 ± 0.013 &  -2.148 \\
				VBKF        & 0.288 ± 0.027 & 0.618 ± 0.141 & 0.980 ± 0.024 & 0.998 ± 0.021 & 0.781 ± 0.031 &  -2.146 \\
				HVKF        & 0.231 ± 0.015 & 0.231 ± 0.015 &0.930 ± 0.023 & 0.998 ± 0.021 & 0.702 ± 0.011 &  -3.079 \\
				{HOHVKF} & {0.231 ± 0.015} & {0.231 ± 0.015} & {0.929 ± 0.023} & {0.998 ± 0.021} & {0.701 ± 0.011} & {-3.085} \\
				{SWHVKF} & {0.231 ± 0.015} & {0.231 ± 0.015} & {0.930 ± 0.023} & {0.998 ± 0.021} & {0.701 ± 0.011} & {-3.080} \\
				\hline
				Ideal KF    & 0.231 ± 0.015 & 0.231 ± 0.015 & 0.927 ± 0.023 & 0.997 ± 0.021 & 0.700 ± 0.011 &  -3.094  \\
				\hline
				\hline
			\end{tabular}
		}
		\label{tab:rmse_comparison}
	\end{table*}
	\begin{table*}[h]
		\centering
		\caption{RMSEs (mean $\pm$ std) of different estimators over 100 Monte Carlo trials.}
		\label{tab:rmse_results_detailed}
				\scalebox{0.78}{
		\begin{tabular}{l c c c c c}
			\hline
			\hline
			Method & $d$ (Nm) & $\dot\theta$ (rad/s) & $\theta$ (rad) & tracking $\theta$ (rad) & tracking $\dot{\theta}$ (rad/s)\\
			\hline
			KF     & $5.4222 \pm 0.0795$ & $0.1748 \pm 0.0039$ & $0.0034 \pm 0.0002$ & $0.0246 \pm 0.0004$ & $0.8906 \pm 0.0101$ \\
			STKF   & $4.9861 \pm 0.0695$ & $0.0988 \pm 0.0026$ & $0.0033 \pm 0.0002$ & $0.0200 \pm 0.0004$ & $0.8306 \pm 0.0072$ \\
			VBKF   & $4.9755 \pm 0.0719$ & \cellcolor{gray!20}$0.0933 \pm 0.0023$ & $0.0033 \pm 0.0002$ & $0.0199 \pm 0.0004$ & $0.8301 \pm 0.0076$ \\
			HVKF   & $4.8888 \pm 0.0797$ & $0.0939 \pm 0.0024$ &\cellcolor{gray!20}$0.0031 \pm 0.0002$ &\cellcolor{gray!20} $0.0187 \pm 0.0004$ & $0.8296 \pm 0.0080$ \\
			HOHVKF &\cellcolor{gray!20} $\cellcolor{gray!20}4.8837 \pm 0.0799$ & \cellcolor{gray!20}$\cellcolor{gray!20}0.0933 \pm 0.0023$ & \cellcolor{gray!20}$\cellcolor{gray!20}0.0031 \pm 0.0002$ & \cellcolor{gray!20}$0.0187 \pm 0.0004$ & \cellcolor{gray!20}$0.8288 \pm 0.0080$ \\
			\hline
			\hline
		\end{tabular}}
	\end{table*}
	\subsection{Example 3: Disturbance Estimation}
	We consider a manipulator tracking example \cite{a9}. The augmented state-space model has
	\begin{equation}
		\begin{aligned}
			{x}_{k}&={\Phi}_k {x}_{k-1}+{G}_k {u}_{k-1}+B_d{w}_{d,k}+B_\theta{w}_{\theta,k}\\
			{y}_{k}&={H}_k{x}_{k}+B_v{v}_{k}
			\label{observer}
		\end{aligned}
	\end{equation}
	with
	\begin{equation}\nonumber
		\tiny
		\begin{aligned}
			{\Phi}_k&=\left[\begin{array}{ccc}
				1&0&0\\
				\frac{T}{{I}_{m}}& 1-\frac{b_{m}T}{{I}_{m}} &-\frac{k_{m}T}{{I}_{m}}\\
				0&T&1\\
			\end{array}\right],	{G}_k =\left[\begin{array}{c}
				0\\
				\frac{T}{{I}_{m}}\\
				0
			\end{array}\right]\\
			{B}_{d} &=\left[\begin{array}{c}
				1\\
				\frac{T}{I_m}\\
				0
			\end{array}\right],	{B}_{\theta} =\left[\begin{array}{c}
				0 \\
				T \\
				\frac{T^2}{2} 
			\end{array}\right],{B}_{v} =\left[\begin{array}{c}
				1 \\
				T \\
			\end{array}\right]\\
			{H}_k&=\left[\begin{array}{ccc}
				0 &1 &0\\
				0 &0 &0.1
			\end{array}\right]
		\end{aligned}
	\end{equation}
	where $x_{k}=[d_{k}, \dot{\theta}_{k},\theta_{k}]^{T}$ denotes
	the unknown external disturbance, the angular velocity and angle. $T=0.01$ s is the sampling interval, $u_k$ is the input torque driven by a computed torque controller (see \cite{a9}), $B_d$ and $B_\theta$ maps the disturbance noise $w_{d,k}$ and the process noise $w_{\theta,k}  \sim \mathcal{N}(0,0.01)$ into the systems, and $v_k \sim \mathcal{N}(0,0.01\mathrm{I}_2)$ is the measurement noise. In the simulation, we use ${I}_{m}=0.1$~Nm$\cdot$s$^2/\deg$, $b_m=1$~Nm$\cdot$s$/\deg$,  $k_m=0.1$~Nm$/\deg$. Moreover, we assume that the unknown disturbance has
	\begin{equation}\nonumber
		{d}_{k}=\left\{\begin{array}{l}
			50+{w}_{d1,k},400\le k \le 600 \\
			{w}_{d2,k}, \operatorname{otherwise}
		\end{array}\right.
	\end{equation}
	where $w_{d1,k}\sim \mathcal{N}(0,100)$ and $w_{d2,k}\sim \mathcal{N}(0,1)$. The desired angle follows $\theta_{d,k}=15 \sin (0.4\pi kT)$. 
	
	As for estimators, we apply the same nominal covariance for all estimators  (if applicable), and tune the hyperparameters (controlling the robustness level) for STKF, MKCKF~\cite{li2021multi} by trial and error. As for VBKF, HVKF, and HOHVKF, we use the same forgetting factor $\rho_d=0.96$, and apply $\gamma_d=0.5$ for HOHVKF. The estimation and control performance by utilizing different estimators is visualized in Fig. \ref{compare} and summarized in Table \ref{tab:rmse_results_detailed}, where the best values are highlighted in gray. We find the proposed methods are consistently better than KF, STKF, and VBKF. 
	
	\begin{figure}[!htp]
		\centering
		\includegraphics[width=0.90\columnwidth]{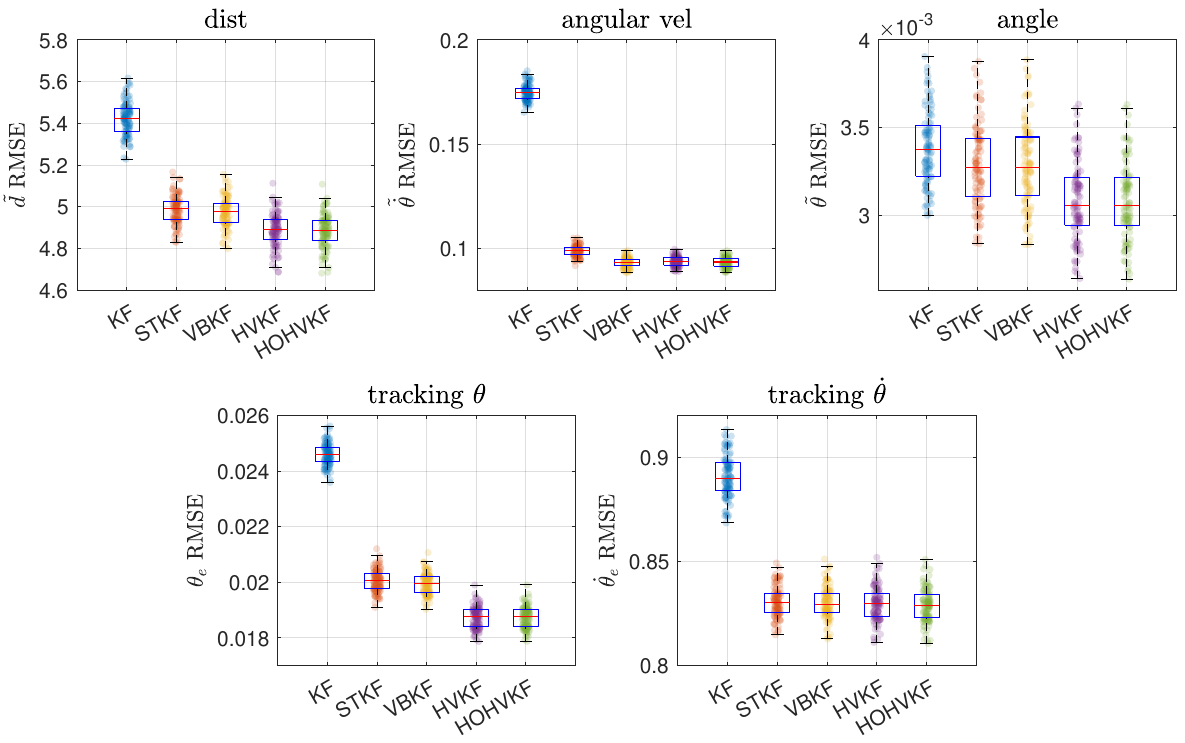}
		\caption{Performance comparison of KF, STKF, VBKF, HVKF, HOHVKF in 100 Monte Carlo runs. The sub-figures visualize the RMSEs of disturbance, angular velocity, angle, the angle tracking error, and the angular velocity tracking error, respectively.}
		\label{compare}
	\end{figure}
	
	\section{Conclusion}
	In this paper, we introduced a hierarchical variational Kalman filtering (HVKF) that systematically decouples variational inference into sequential robust state filtering and hyperparameter updating. By lifting the system dimensions, this framework unifies the treatment of process and measurement noise covariances, proving particularly advantageous for inferring rank-deficient process covariances. Capitalizing on this decoupled architecture, we successfully accelerated covariance convergence by either modifying the prior model of the latent scale parameters or estimating them via a sliding window mechanism. Extensive simulations confirm the superiority of the proposed approaches compared to existing approaches.
	\section{Appendix}
	\subsection{Proof of Lemma \ref{identlemma}}
	\label{identproof}
	The state $\chi_k$ is uniquely identifiable if and only if the null space of the design matrix $\mathcal{N}(W_k) = \{\mathbf{0}\}$. Let $q_k = \begin{bmatrix} m_k \\ x_k \end{bmatrix} \in \mathcal{N}(W_k)$, implying $W_k q_k = \mathbf{0}$. Expanding this expression yields:
	\begin{subequations}
		\label{eq:null_space_conditions}
		\begin{align}
			B_p^{-1} m_k &= \mathbf{0}, \label{eq:1} \\
			B^\dagger (m_k - x_k) &= \mathbf{0}, \label{eq:2} \\
			D^{-1} C x_k &= \mathbf{0}. \label{eq:3}
		\end{align}
	\end{subequations}
	According to \eqref{wreg} and \eqref{Bk}, we know that $B_p$ and $D$ are non-singular. Thus, \eqref{eq:1} guarantees $m_k = \mathbf{0}$. Substituting this result into \eqref{eq:2} gives $B^\dagger x_k = \mathbf{0}$. Note that $B^\dagger$ and $B^{T}$ share the same null space, since $B^\dagger= (B^T B)^{-1} B^T$ where $(B^T B)^{-1}$ is invertible. Thus, $x_k \in \mathcal{N}(B^T)$. Meanwhile, \eqref{eq:3} indicates $x_k \in \mathcal{N}(C)$ since $D$ is non-singular. Consequently, $\mathcal{N}(W_k) = \{\mathbf{0}\}$ if and only if $\mathcal{N}(B^T) \cap \mathcal{N}(C) = \{\mathbf{0}\}$.
	\subsection{The standard CAVI}
	\label{CAVIalg}
	We first focus on $q(\chi_k)$. Based on \eqref{vbsub1} and denoting $q(\chi_k) = \mathcal{N}(\mu_{\chi_k}, \Sigma_{\chi_k})$, the precision matrix determines the covariance and mean:
	\begin{equation}
		\Sigma_{\chi_k} = \left( W_k^\top \mathbb{E}\left[ \Lambda^{-1} \right] W_k \right)^{-1}, \quad \mu_{\chi_k} = \Sigma_{\chi_k} W_k^\top \mathbb{E}\left[ \Lambda^{-1} \right] t_k
		\label{vbchi}
	\end{equation}
	where $\mathbb{E}\left[ \Lambda^{-1} \right]$ is a diagonal matrix and 
	\begin{equation}\notag
		\mathbb{E}\left[ \Lambda^{-1} \right]_{ii}  = \frac{\nu_i + 1}{\nu_i \tau_i^2 + \mathbb{E}_{q(\chi_k)} \left[ (t_{k,i} - W_{k,i}\chi_k)^2 \right]}
		\label{wei}
	\end{equation}
	with $$\scalebox{0.85}{$\mathbb{E}_{q(\chi_k)} \left[ (t_{k,i} - W_{k,i}\chi_k)^2 \right] = (t_{k,i} - W_{k,i}\mu_{\chi_k})^2 + W_{k,i}\Sigma_{\chi_k}W_{k,i}^{T}$}.
	$$
	We then focus on $q(\lambda^{wv})$. Based on \eqref{vbsub2}, one can derive that $q^*(\lambda_i)$ follows an Inverse-Gamma distribution $\operatorname{IG}(\frac{\nu_i^{*}}{2}, \frac{\nu_i^{*}}{2}(\tau_i^{*})^{2})$ with 
	$$
	\nu_i^{*} = \nu_i + 1,~(\tau_i^{*})^2 = \frac{\nu_i \tau_i^2 + \mathbb{E}_{q(\chi_k)} \left[ (t_{k,i} - W_{k,i}\chi_k)^2 \right]}{\nu_i + 1}.
	$$
	\subsection{Proof of Theorem \ref{convRWLS}}
	\label{fixproof}
	Let $e_i(\chi_k) = [t_k - W_k\chi_k]_i$ denote the $i$-th residual element. The objective function in \eqref{rbobj} can be reformulated as $\mathcal{J}(\chi_k) = \sum_{i} \phi(e_i^2(\chi_k))$, where the scalar function $\phi : \mathbb{R}_{\ge 0} \to \mathbb{R}$ is defined as $\phi(u) = \frac{\nu_i+1}{2} \log(1 + \frac{u}{\nu_i\tau_i^2})$. Since $\phi(u)$ is continuously differentiable and strictly concave on $u \ge 0$ (verified by $\phi''(u) < 0$), it is bounded above by its first-order Taylor expansion at any expansion point $u_0$. Specifically, 
	$$\phi(u) \le \phi(u_0) + \phi'(u_0)(u - u_0).$$
	Let $u_i^{(j)} \triangleq (e_i(\chi_k^{(j)}))^2$ be the squared residual at iteration $j$. Applying the concavity property, we construct a surrogate function $Q(\chi_k, \chi_k^{(j)})$ that serves as a global upper bound for $\mathcal{J}(\chi_k)$:
	\begin{equation}
		\scalebox{0.8}{$
			\mathcal{J}(\chi_k) \le Q(\chi_k, \chi_k^{(j)}) := \sum_{i} \left[ \phi(u_i^{(j)}) + \phi'(u_i^{(j)}) \left( e_i^2(\chi_k) - u_i^{(j)} \right) \right].$}
		\label{eq:surrogate}
	\end{equation}
	Minimizing $Q(\chi_k, \chi_k^{(j)})$ with respect to $\chi_k$ is equivalent to minimizing the weighted least squares term $\sum_{i} \omega_i^{(j)} e_i^2(\chi_k)$, as the other terms are constant with respect to $\chi_k$. The weights are given by $\omega_i^{(j)} = \phi'(u_i^{(j)}) = \frac{\nu_i+1}{\nu_i\tau_i^2 + u_i^{(j)}}$, which is identical to the diagonal elements of $\Lambda_k(e_k)$ defined in \eqref{weight}. The fixed-point iteration \eqref{solu1} computes the global minimizer of this surrogate:
	\begin{equation}
		\chi_k^{(j+1)} = \arg\min_{\chi_k} Q(\chi_k, \chi_k^{(j)}).
	\end{equation}
	The monotonic descent property follows from the majorization-minimization principle:
	\begin{equation}
		\mathcal{J}(\chi_k^{(j+1)}) \le Q(\chi_k^{(j+1)}, \chi_k^{(j)}) \le Q(\chi_k^{(j)}, \chi_k^{(j)}) = \mathcal{J}(\chi_k^{(j)}).
	\end{equation}
	Since $\mathcal{J}(\chi_k)$ is bounded below by 0, the sequence of objective values $\{\mathcal{J}(\chi_k^{(j)})\}$ converges. This completes the proof.
	
	Some representative losses that can be solved by RWLS with guaranteed convergence are listed in Table \ref{lossRWLS}.
	\begin{table}[h]
		\centering
		\caption{Exemplary losses that can be solved by RWLS.}
		\scalebox{0.85}{
			\begin{tabular}{cc}
				\hline
				\hline 
				{Robust Loss} & {Function $\rho(x)$} \\
				\hline
				Geman-McClure & $\frac{c^2 x^2}{2(c^2 + x^2)}$ \\
				Welsch & $c^2 \left(1 - \exp\left(-\frac{x^2}{2c^2}\right)\right)$\\
				Smoothed Huber & $c^2 \left(\sqrt{1 + (x/c)^2} - 1\right)$ \\
				Fair & $c^2 \left( \frac{|x|}{c} - \ln\left(1 + \frac{|x|}{c}\right) \right)$ \\
				\hline 
				\hline                                         
		\end{tabular}}
		\label{lossRWLS}
	\end{table}
	\subsection{An Alternative Expression of \eqref{update} and \eqref{errcov}}
	\label{alternate}
	By expanding $\tilde{P}_{\chi k}^{-}$ in  \eqref{update}, we arrive at
	\begin{equation}
		\tiny
		\begin{aligned}
			x_{k,t}^{+} &= x_k^{-} + \tilde{K}_{xk,t}(y_k - C x_k^{-}) \\
			\tilde{K}_{xk,t} &= \tilde{P}_{k}^{-} C^{T}(C \tilde{P}_{k}^{-} C^{T} + \tilde{R}_{k})^{-1} \\
			m_{k,t}^{+} &= m_k^{-} + \tilde{K}_{mk,t}(y_k - C x_k^{-}) \\
			\tilde{K}_{mk,t} &= \Sigma_{Pk} C^{T}(C \tilde{P}_{k}^{-} C^{T} + \tilde{R}_{k})^{-1} \\
			\tilde{P}_{k}^{-} &= \Sigma_{Pk} + \tilde{Q}_{k} \\
			\tilde{Q}_{k} &= B \Lambda_{w}^{-1} B^{T} \\
			\Lambda_{w} &= \operatorname{diag}[d_{\nu_{n+1}}(e_{n+1}), \ldots, d_{\nu_{n+p}}(e_{n+p})]\\
			\tilde{R}_{k} &= D \Lambda_{v}^{-1} D^{T}
			\\
			\Lambda_{v} &= \operatorname{diag}[d_{\nu_{n+p+1}}(e_{n+p+1}), \ldots, d_{\nu_{n+p+m}}(e_{n+p+m})] \\
			e_k &= t_k - W_k \chi_k^{t-1}
		\end{aligned}
		\label{update1}
	\end{equation}
	Correspondingly, at the final iteration, the posteriori error covariance can be update as
	\begin{equation}
		\tiny
		\begin{aligned}
			P_{xk}^{+} &= (I - \tilde{K}_{xk,t} C) {P}_{k}^{-} (I - \tilde{K}_{xk,t} C)^T + \tilde{K}_{xk,t} {R}_{k} \tilde{K}_{xk,t}^{T}\\
			P_{m_k}^{+} &= (I - \tilde{K}_{mk,t} C) \Sigma_{Pk} (I - \tilde{K}_{mk,t} C)^T + \tilde{K}_{mk,t} {R}_{k} \tilde{K}_{mk,t}^T\\
			P_{m_k x_k}^+ &=(P_{ x_k x_k}^+)^{T}= \Sigma_{Pk} (I - C^T \tilde{K}_{xk,t}^T)
		\end{aligned}
		\label{posterior_covariance}
	\end{equation}
	
	\subsection{Proof of Proposition \ref{scaleinv}}
	\label{pscale}
	We proceed by induction. For the base case $k=0$, $\hat{x}'_{0|0} = \hat{x}_{0|0}$ and $P'_{0|0} = \alpha P_0$ hold by definition. Assume $\hat{x}'_{k-1|k-1} = \hat{x}_{k-1|k-1}$ and $P'_{k-1|k-1} = \alpha P_{k-1|k-1}$. The recursive filter updates yield:$$
	\tiny
	\begin{aligned}
		\hat{x}'_{k|k-1} &= A\hat{x}'_{k-1|k-1} = A\hat{x}_{k-1|k-1} = \hat{x}_{k|k-1} \\
		P'_{k|k-1} &= A(\alpha P_{k-1|k-1})A^T + \alpha Q_{k-1} = \alpha P_{k|k-1} \\
		K'_k &= (\alpha P_{k|k-1})C^T \big[ C(\alpha P_{k|k-1})C^T + \alpha R_k \big]^{-1} = K_k \\
		\hat{x}'_{k|k} &= \hat{x}'_{k|k-1} + K'_k(y_k - C\hat{x}'_{k|k-1}) = \hat{x}_{k|k} \\
		P'_{k|k} &= (I - K'_k C)P'_{k|k-1} = (I - K_k C)(\alpha P_{k|k-1}) = \alpha P_{k|k}
	\end{aligned}$$
	By induction, the lemma holds for all $k \ge 1$.
	\subsection{Proof of Theorem \ref{prop:refined_estimator}}
	\label{prefined_estimator}
	Taking the expectation of \eqref{scale_update} yields a recursive geometric series that asymptotically converges to $\sigma_i^2$, satisfying Property 1. For Property 2, assuming temporal independence of $\mathcal{E}_{i,k}$, by taking variance of \eqref{scale_update} yields
	$$
	\operatorname{Var}[(\tau_{i,k|k})^2] = \rho_i^{2} \operatorname{Var}[(\tau_{i,k-1|k-1})^{2}] + (1-\rho_i)^2 \operatorname{Var}[ \mathbb{E}[e_i^2]].
	$$
	Subsequently, the steady-state variance of the EWMA filter is given by $V_{\infty} = \frac{1-\rho_i}{1+\rho_i} \eta_i^4$. Substituting $\rho_i = \frac{\nu_i}{\nu_i+1}$ gives
	\begin{equation*}
		V_{\infty} = \frac{1 - \frac{\nu_i}{\nu_i+1}}{1 + \frac{\nu_i}{\nu_i+1}} \eta_i^4 = \frac{1}{2\nu_i + 1} \eta_i^4.
	\end{equation*}
	For Property 3, the EWMA time constant is $T_i = \frac{-1}{\ln(\rho_i)}$ (by the definition of EWMA).
	\subsection{Proof of Theorem \ref{prop:sliding_smoothing_equivalence}}
	\label{psliding}
	Setting the gradient $\partial \mathcal{J} / \partial \tau_{i,j}^2 = 0$ yields the symmetric tridiagonal stationarity condition for the interior points:
	\begin{equation} \label{eq:stationarity_window}
		(1+2\lambda)(\tau_{i,j|k})^2 - \lambda (\tau_{i,j-1|k})^2 - \lambda (\tau_{i,j+1|k})^2 = \mathcal{E}_{i,j}.
	\end{equation}
	Substituting $\lambda = \frac{\rho_i}{(1-\rho_i)^2}$ and multiplying by $(1-\rho_i)^2$, this condition becomes:
	\begin{equation} \label{eq:stationarity_sub_window}
		(1+\rho_i^2)(\tau_{i,j|k})^2 - \rho_i (\tau_{i,j-1|k})^2 - \rho_i (\tau_{i,j+1|k})^2 = (1-\rho_i)^2 \mathcal{E}_{i,j}.
	\end{equation}
	To verify the proposed two-pass solution, we isolate the forward estimate from the backward pass:
	\begin{equation} \label{eq:isolate_forward}
		(1-\rho_i)(\tau_{i,j|j})^2 = (\tau_{i,j|k})^2 - \rho_i (\tau_{i,j+1|k})^2.
	\end{equation}
	Rearranging the forward filter gives $(1-\rho_i)\mathcal{E}_{i,j} = (\tau_{i,j|j})^2 - \rho_i (\tau_{i,j-1|j-1})^2$. Multiplying by $(1-\rho_i)$ and substituting the relation from \eqref{eq:isolate_forward} for both steps $j$ and $j-1$ yields:
	\begin{equation}
		\begin{aligned}
			(1-\rho_i)^2 \mathcal{E}_{i,j} &= \left[ (\tau_{i,j|k})^2 - \rho_i (\tau_{i,j+1|k})^2 \right] \\
			&- \rho_i \left[ (\tau_{i,j-1|k})^2 - \rho_i (\tau_{i,j|k})^2 \right].
		\end{aligned}
	\end{equation}
	Expanding the right side perfectly recovers the global optimality condition \eqref{eq:stationarity_sub_window}. Because $\mathcal{J}$ is strictly convex, satisfying this stationarity condition guarantees that the two-pass sequence is the unique exact minimizer.
	
	\bibliographystyle{IEEEtranN}
	\bibliography{ReferenceVector}
	
	\begin{wrapfigure}{l}{1.8cm}
		\includegraphics[width=0.9in,height=1.25in,clip,keepaspectratio]{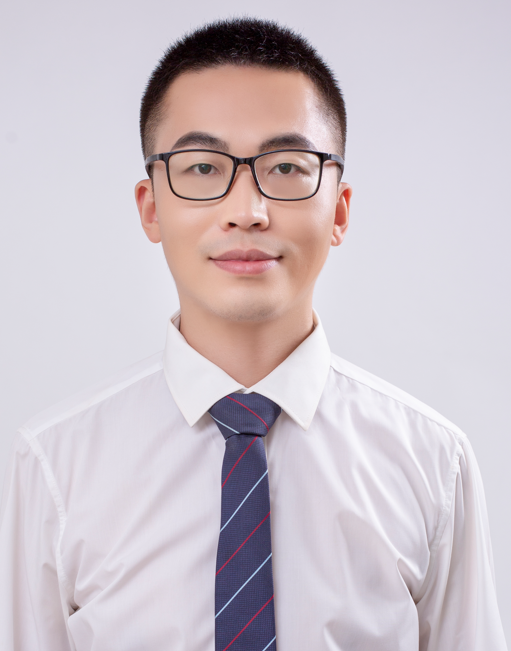}
	\end{wrapfigure}
	Shilei Li received the B.E. degree in Detection Guidance and Control Technology and M.S. degree in Control Engineering both from Harbin Institute of Technology, Harbin, China, in 2015 and 2018 respectively, and the Ph.D. degree in Electronic and Computer Engineering from The Hong Kong University of Science and Technology, Hong Kong, China, in 2022. Currently, he is an assistant professor at the School of Automation, Beijing Institute of Technology, China. His research interests include robust filtering and smoothing, human-robot interaction, and exoskeleton robots.

	\begin{wrapfigure}{l}{1.8cm}
		\includegraphics[width=0.9in,height=1.25in,clip,keepaspectratio]{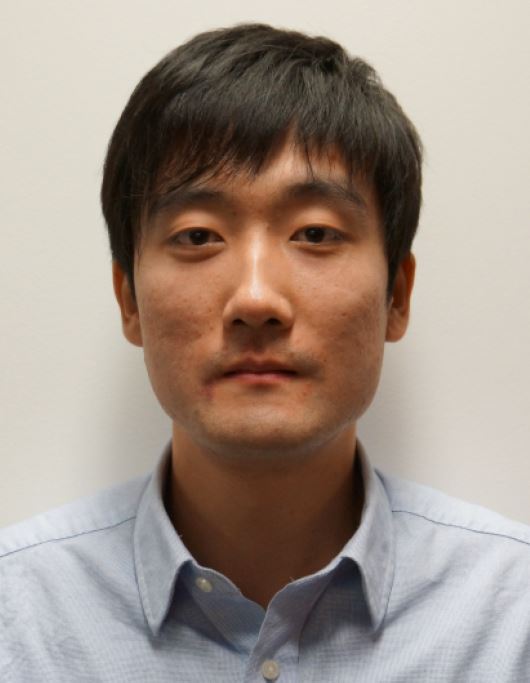}
	\end{wrapfigure}
	Dawei Shi  received the B.E. degree in Electrical Engineering and Automation from the Beijing Institute of Technology, Beijing, China,
	in 2008, the Ph.D. degree in Control Systems from the University of Alberta, Edmonton, AB, Canada, in 2014. In December 2014, he was appointed as an Associate Professor at the School of Automation, Beijing Institute of Technology. From February 2017 to July 2018, he was with
	the Harvard John A. Paulson School of Engineering and Applied Sciences, Harvard University, as a Postdoctoral Fellow in bioengineering. Since July 2018, he has been with the School of Automation, Beijing Institute of Technology, where he is a professor. His research focuses on the analysis and synthesis of complex sampled-data control systems with applications to biomedical engineering, robotics, and motion systems. He serves as an Associate Editor/Technical Editor for IEEE Transactions on Industrial Electronics, IEEE/ASME Transactions on Mechatronics, IEEE Control Systems Letters, and IET Control Theory and Applications. He is a member of the Early Career Advisory Board of Control Engineering Practice. He was a Guest Editor for European Journal of Control. He served as an associate editor for IFAC World Congress and is a member of the IEEE Control Systems Society Conference Editorial Board. He is a Senior Member of the IEEE.

	\begin{wrapfigure}{l}{1.8cm}
		\includegraphics[width=0.9in,height=1.25in,clip,keepaspectratio]{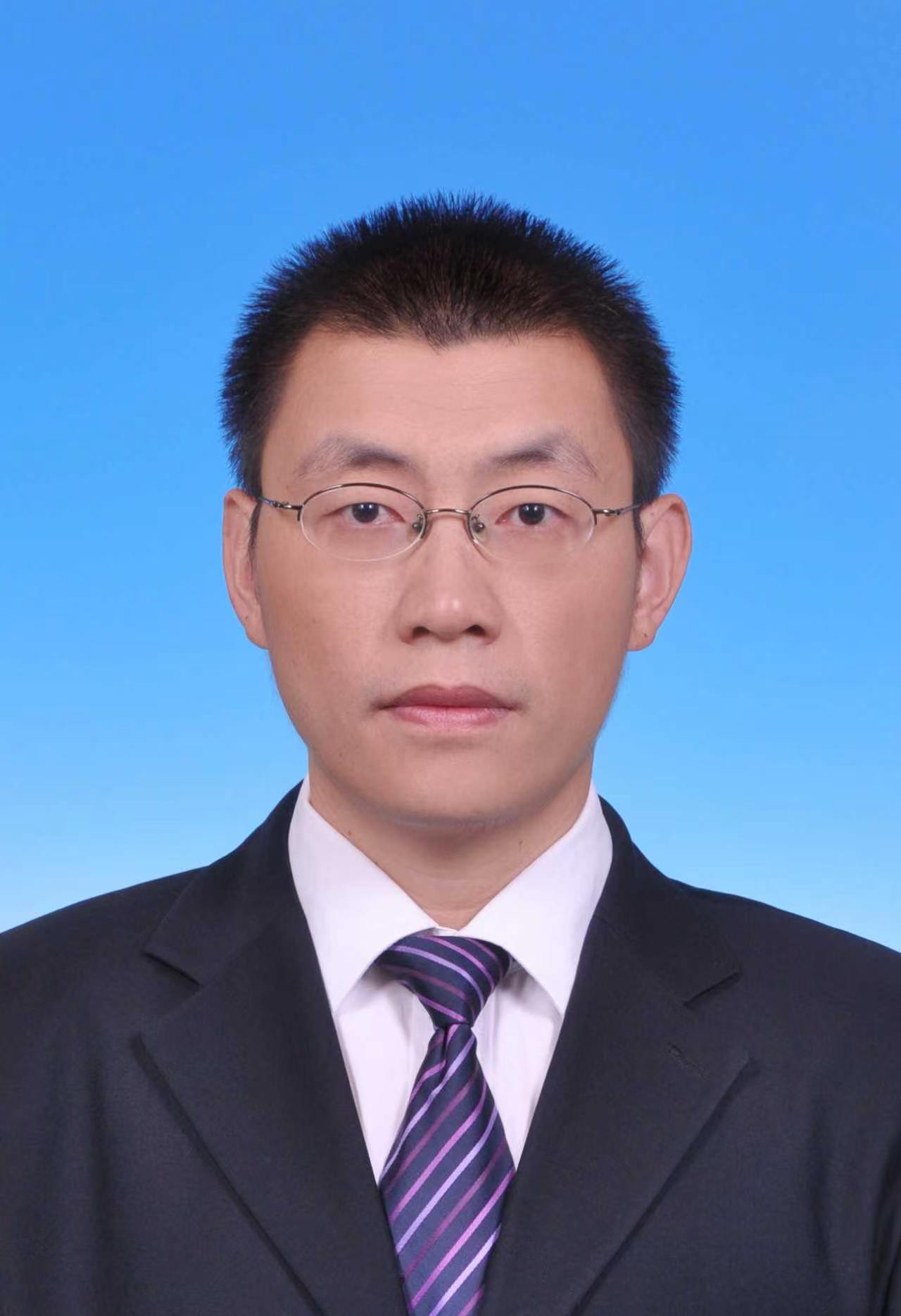}
	\end{wrapfigure}
	Wei Zheng received his Bachelor of Engineering degree in Electronic Information Engineering from the Department of Electronic Engineering, Tsinghua University in 2002. He obtained his master’s degree in 2008 and his Ph.D. in 2013, both from the Intelligent Information Processing Laboratory, Institute of Computing Technology, Chinese Academy of Sciences. From 2013 to March 2016, he conducted postdoctoral research at the Visual Intelligence Laboratory of Samsung Research Institute (Beijing) Communication Technology.
	His research findings have been published in numerous renowned international conferences and journals, including CVPR, ACCV, IEEE Transactions, and Elsevier, and he holds a number of domestic and international invention patents.
	From 2009 to 2010, he conducted research on object recognition and multimedia retrieval at Panasonic Singapore Laboratories and Microsoft Research Asia (MSRA) respectively. From 2013 to 2017, he led the development of intelligent algorithms for mobile devices at Samsung Research Institute Beijing Communication Technology, where he led his team in upgrading recognition algorithms from traditional classification methods to deep learning approaches, ultimately delivering core algorithms for the intelligent photo album management engine in Samsung flagship smartphones.
	From 2017 to 2019, he served as an algorithm researcher at the joint laboratory of the Hong Kong University of Science and Technology and Youjia Innovation Co., Ltd., focusing on visual recognition algorithms for intelligent driving. Since 2019, he has been Vice President of Algorithm R\&D at Youjia Innovation Co., Ltd., leading a R\&D team of over 100 members to develop intelligent driving algorithms and designing core algorithm architectures for multiple autonomous driving products of the company.
	
	\begin{wrapfigure}{l}{1.8cm}
		\includegraphics[width=0.9in,height=1.25in,clip,keepaspectratio]{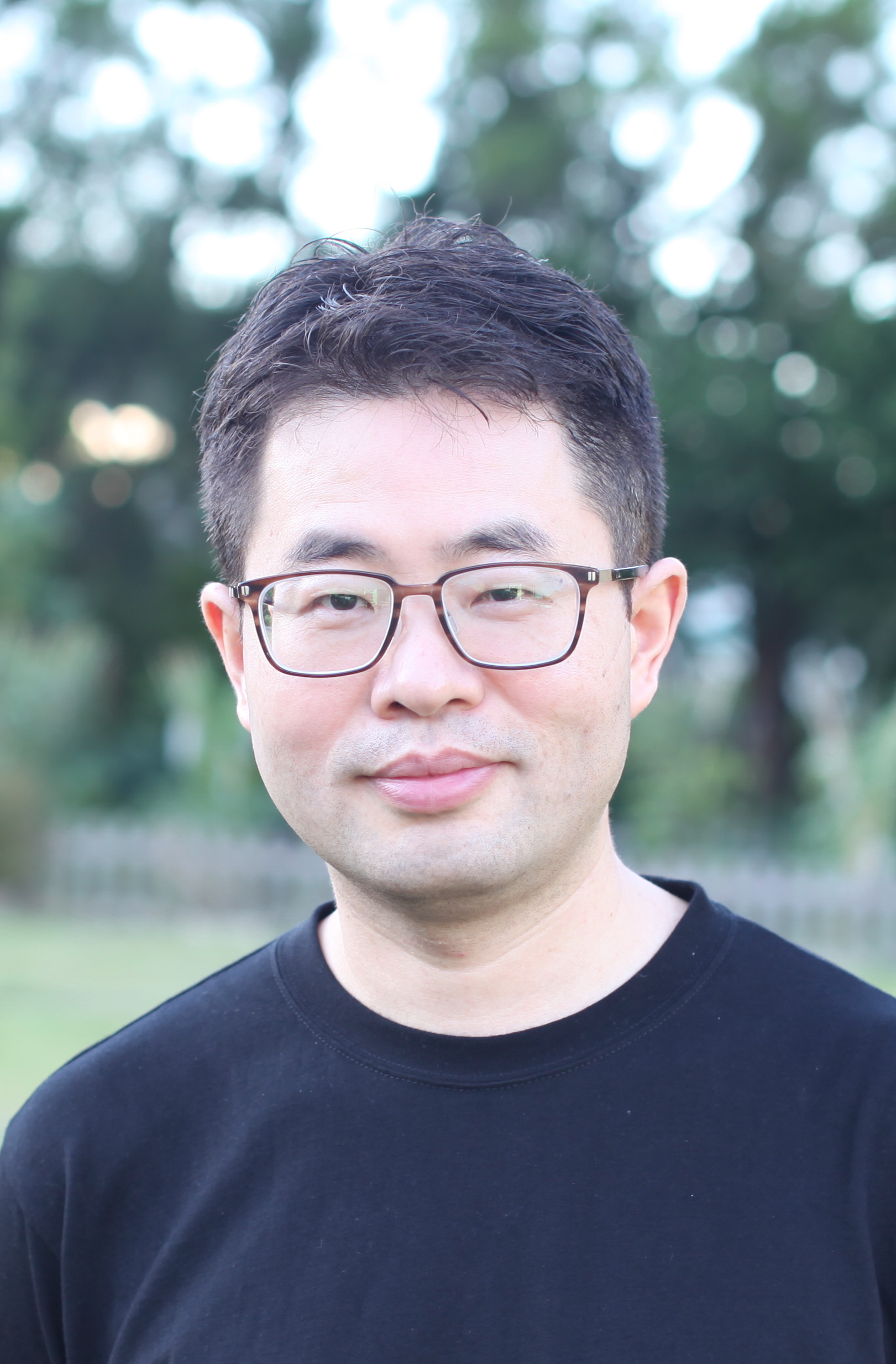}
	\end{wrapfigure}
	Ling Shi(Fellow, IEEE) received the B.S. degree in electrical and electronic engineering from Hong Kong University of Science and Technology, Kowloon, Hong Kong, in 2002 and the Ph.D. degree in Control and Dynamical Systems from California Institute of Technology, Pasadena, CA, USA, in 2008. He is currently a Professor in the Department of Electronic and Computer Engineering, and the associate director of the Robotics Institute, both at the Hong Kong University of Science and Technology. His research interests include cyber-physical systems security, networked control systems, sensor scheduling, event-based state estimation, and exoskeleton robots. He served as an editorial board member for the European Control Conference 2013-2016. He was a subject editor for International Journal of Robust and Nonlinear Control (2015-2017). He has been serving as an associate editor for IEEE Transactions on Control of Network Systems from July 2016, and an associate editor for IEEE Control Systems Letters from Feb 2017. He also served as an associate editor for a special issue on Secure Control of Cyber Physical Systems in the IEEE Transactions on Control of Network Systems in 2015-2017. He served as the General Chair of the 23rd International Symposium on Mathematical Theory of Networks and Systems (MTNS 2018). He is a member of the Young Scientists Class 2020 of the World Economic Forum (WEF) and he is a fellow of IEEE (Class 2023). 
	
\end{document}